\tikzset{
    >=stealth',
    punkt/.style={
           rectangle,
           rounded corners,
           draw=black, very thick,
           text width=6.5em,
           minimum height=2em,
           text centered},
    pil/.style={
           ->,
           thick,
           shorten <=2pt,
           shorten >=2pt,}
}
\title{Slack and Margin Rescaling as Convex Extensions of Supermodular Functions}
\author{Matthew B.\ Blaschko}
\institute{Center for Processing Speech \& Images\\
Departement Elektrotechniek, KU Leuven\\
Kasteelpark Arenberg 10\\
3001 Leuven, Belgium\\
\email{matthew.blaschko@esat.kuleuven.be}
}
\begin{document}

\maketitle

\begin{abstract}
  Slack and margin rescaling are variants of the structured output SVM, which is frequently applied to problems in computer vision such as image segmentation, object localization, and learning parts based object models.  They define convex surrogates to task specific loss functions, which, when specialized to non-additive loss functions for multi-label problems, yield extensions to increasing set functions.  We demonstrate in this paper that we may use these concepts to define polynomial time convex extensions of arbitrary supermodular functions, providing an analysis framework for the tightness of these surrogates.
This analysis framework shows that, while neither margin nor slack rescaling dominate the other, known bounds on supermodular functions can be used to derive extensions that dominate both of these, indicating possible directions for defining novel structured output prediction surrogates. 
In addition to the analysis of structured prediction loss functions, these results imply an approach to supermodular minimization in which margin rescaling is combined with non-polynomial time convex extensions to compute a sequence of LP relaxations reminiscent of a cutting plane method.  This approach is applied to the problem of selecting representative exemplars from a set of images, validating our theoretical contributions.
\end{abstract}

\section{Introduction}

Structured output support vector machines \cite{Tsochantaridis2005} are commonly applied to a range of structured prediction problems in computer vision.  A key open question is the tightness of the loss surrogate, with negative results from statistical learning theory indicating that neither slack rescaling nor margin rescaling variants lead to statistical consistency \cite{McAllesterPSD2007}.  Nevertheless, due to their good empirical performance, they are frequently applied in practice, indicating the importance of the analysis of these surrogates. 

In this work, we explore strategies analogous to margin and slack rescaling in structured output support vector machines~\cite{Tsochantaridis2005} for set function minimization in general, and substantially advance the theory of supermodular set function minimization as well as provide insights into possible improvements to structured output support vector machines.  In particular, we advance the theory of convex extensions of supermodular extensions by showing an explicit form for the closest convex extension to the convex closure within a specific multiplicative family (Proposition~\ref{thm:SisTightOverMultiplicativeNoCallsToell}), and an additive family (Proposition~\ref{thm:MarginDominatesAdditive}).  
We prove that extension operators derived from modular upper bounds \cite{JegelkaB2011cvpr,iyer2012-min-diff-sub-funcs} on submodular functions dominate both slack and margin rescaling, but computation of these tighter convex extensions do not correspond with supermodular maximization problems.  This in general suggests a trade-off between the tightness of the extension and the tractability of its computation, with only margin rescaling demonstrated to be polynomial time computable.  We develop a minimization strategy where the computation of margin rescaling is used to optimize other surrogates for which polynomial time algorithms are unknown. 
We summarize key theoretical contributions in  Figure~\ref{fig:mainResults} through a partial ordering over convex extensions of set functions considered in this paper using the notion of restricted operator inequality (Definition~\ref{def:RestrictedOperatorInequality}). If a fixed finite set of 
convex extensions does not have a total ordering, the pointwise maximum of these extensions is a 
convex extension that dominates them all.

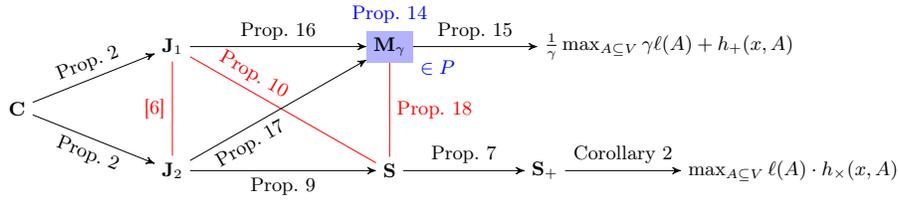
\begin{figure}[t]
\centering
  \resizebox{\textwidth}{!}{
\begin{tikzpicture}[scale=.9, transform shape]
\node (c) at +(0: 2.5) {$\mathbf{C}$};
\node (j1) at +(5,1) {$\mathbf{J}_1$};
\node (j2) at +(5,-1) {$\mathbf{J}_2$};
\node [rectangle, fill=blue!30]
(m) [label=above:{\textcolor{blue}{Prop.~\ref{prop:MisPolyTime}}}] [label=345:{\textcolor{blue}{$\in P$}}] at +(8.5,1) {$\mathbf{M}_\gamma$};
\node (s) at +(8.5,-1) {$\mathbf{S}$};
\node (sp) at +(11,-1) {$\mathbf{S}_+$};
\node (mult) at +(15,-1) {$ \max_{A\subseteq V} \ell(A) \cdot h_{\times}(x,A)$};
\node (add) at +(13,1) {$\frac{1}{\gamma} \max_{A \subseteq V} \gamma \ell(A) + h_{+}(x,A)$};
\draw [->] (c) -- (j1) node[pos=.5,sloped,above] {Prop.~\ref{thm:CisTheGreatestConvexExtension}};
\draw [->] (c) -- (j2) node[pos=.5,sloped,below] {Prop.~\ref{thm:CisTheGreatestConvexExtension}};
\draw [->] (j1) -- (m) node [pos=.5,above] {Prop.~\ref{prop:J1geqM}};
\draw [->] (j2) -- (s) node [pos=.5,below] {Prop.~\ref{prop:J2geqS}};
\draw [->] (s) -- (sp) node [pos=.5,above] {Prop.~\ref{prop:SgeqS+}};
\draw [->] (sp) -- (mult) node [pos=.5,above] {Corollary~\ref{thm:S+dominatesMultiplicativeFamily}};
\draw [red] (j1) -- (j2) node [pos=.5,left] {\cite{iyer2012-min-diff-sub-funcs}};
\draw [red] (j1) -- (s) node[pos=.3,sloped,above] {Prop.~\ref{prop:J1ngeqS}};
\draw [red] (s) -- (m) node [pos=.5,right] {Prop.~\ref{thm:SlackNotDominatesMargin}};
\draw [->] (j2) -- (m) node [pos=.3,sloped,below] {Prop.~\ref{prop:J2geqM}};
\draw [->] (m) -- (add) node [pos=.5,above] {Prop.~\ref{thm:MarginDominatesAdditive}};
\end{tikzpicture}
}
  \caption{A summary of the main theoretical contributions in this paper. 
    $\mathbf{C}$ denotes the convex closure, $\mathbf{J}_1$ and $\mathbf{J}_2$ are extensions based on known modular upper bounds to supermodular functions \cite{JegelkaB2011cvpr,iyer2012-min-diff-sub-funcs}, and $\mathbf{M}$ and $\mathbf{S}$ indicate variants of extensions derived from margin rescaling and slack rescaling, directed edges indicate that the parent dominates the child, and red undirected edges indicate that there is no ordering between a pair of nodes. $\mathbf{M}_\gamma$ is the only convex extension whose computation always corresponds to a supermodular maximization and is therefore known to be polynomial time computable.
}\label{fig:mainResults}
\end{figure}

Supermodular minimization (submodular maximization) has been widely studied both in the context of relaxations as well as for fully combinatorial approximation algorithms~\cite{krause2012survey}.  Notable contributions include a 1/2-approximation algorithm for unconstrained non-negative submodular maximization \cite{Buchbinder2012TLT} and the multi-linear extension \cite{Vondrak2008OAS,Vondrak2010curvature}, which, although non-convex, has been used in an approximate minimization framework.
A large amount of work has been done on relaxations approximate energy minimization in the context of random field models with low maximal clique size \cite{KumarJMLR2009}, including roof duality \cite{Kahl2012GRD}, or by making additional assumptions on the structure of the graph such as balancedness \cite{wellerAISTATS2016}. 
The connection between the structured output SVM \cite{Tsochantaridis2005} and convex extensions of set functions seems to be due to \cite{yu:hal-01151823}.  The comparative difficulty of optimizing slack rescaling vs.\ margin rescaling is known, and an interesting approach to the optimization of slack rescaling via multiple margin rescaling operations can be found in \cite{ChoiAISTATS2016}.
In the sequel, we first develop basic definitions and results for supermodular set functions and LP relaxations (Section~\ref{sec:MathematicalPreliminaries}), we then provide the main theoretical analysis in Section~\ref{sec:SlackAndMarginRescaling} before demonstrating empirical results consistent with the theory (Section~\ref{sec:Empirical}).

\section{Mathematical Preliminaries}\label{sec:MathematicalPreliminaries}

\begin{definition}[Set function]
A set function $\ell$ is a mapping from the power set of a base set $V$ to the reals:
\begin{equation}
\ell : \mathcal{P}(V) \mapsto \mathbb{R} .
\end{equation}
\end{definition}
We will assume wlog that $\ell(\emptyset)=0$ for all set functions in the sequel.
\begin{definition}[Extension]
An extension of a set function is an operator that yields a continuous function over the $p$-dimensional unit cube (where $p=|V|$) such that the function value on each vertex $x$ of the unit cube is equal to $\ell(A)$ for $A$ such that $x_i=1 \iff i \in A$.
\end{definition}

\begin{definition}[Submodular set function]
  A set function $f$ is said to be submodular if for all $A \subseteq B \subset V$ and $x\in V\setminus B$,
  \begin{equation}
    f(A\cup \{x\}) -f(A) \geq f(B \cup \{x\}) - f(B) .
  \end{equation}
\end{definition}
A set function is said to be supermodular if its negative is submodular, and a function is said to be modular if it is both submodular and supermodular.
We denote the set of all submodular functions $\mathcal{S}$, the set of all supermodular functions $\mathcal{G} := \{g | -g\in \mathcal{S}\}$, and $\mathcal{G}_+ := \{g | g(S)\geq 0, \ \forall S\subseteq V\} \cap \mathcal{G}$ denotes the set of all non-negative supermodular functions.  Additional properties of submodular functions can be found in \cite{fujishige2005submodular}.

\begin{lemma}
  Any  $g \in \mathcal{G}$ such that $g(\{x\})\geq 0,\ \forall x\in V$ is an increasing function, i.e.\ $g(A \cup \{x\}) \geq g(A),\ \forall A \subset V, x\in V \setminus A$.
\end{lemma}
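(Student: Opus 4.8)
The plan is to prove the contrapositive-free direct statement by induction on the size of the set $A$, exploiting supermodularity to propagate the nonnegativity of singleton increments to arbitrary increments. First I would fix $x \in V$ and $A \subset V$ with $x \notin A$, and consider the marginal gain $g(A \cup \{x\}) - g(A)$. The key observation is that supermodularity of $g$ (equivalently, submodularity of $-g$) says exactly that the marginal gain of adding $x$ is \emph{non-decreasing} in the set to which it is added: for $\emptyset \subseteq A$ we have $g(\{x\}) - g(\emptyset) \leq g(A \cup \{x\}) - g(A)$. Since $g(\emptyset) = 0$ by our standing assumption and $g(\{x\}) \geq 0$ by hypothesis, this immediately gives $g(A \cup \{x\}) - g(A) \geq g(\{x\}) \geq 0$, which is the desired inequality.

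So the argument is in fact a one-line consequence of the definition of supermodularity applied with the smaller set taken to be $\emptyset$. The steps in order: (i) recall $g \in \mathcal{G}$ means $-g$ is submodular, i.e.\ for all $A \subseteq B \subset V$ and $y \in V \setminus B$, $g(A \cup \{y\}) - g(A) \leq g(B \cup \{y\}) - g(B)$; (ii) specialize to $A = \emptyset$, $B$ an arbitrary subset not containing $x$, $y = x$; (iii) use $g(\emptyset) = 0$ and $g(\{x\}) \geq 0$ to conclude $g(B \cup \{x\}) - g(B) \geq g(\{x\}) - g(\emptyset) = g(\{x\}) \geq 0$; (iv) rename $B$ as $A$ to match the statement.

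I do not expect any real obstacle here; the only thing to be careful about is the direction of the inequality when passing between "submodular" and "supermodular" — submodularity of $-g$ flips the inequality in the definition, so one must check that marginal gains of $g$ are non-decreasing (not non-increasing) in the base set, which is exactly what makes the singleton hypothesis propagate upward rather than downward. No induction is actually needed, since the supermodular inequality already compares an arbitrary set directly with the empty set.
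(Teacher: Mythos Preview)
Your proof is correct and essentially identical to the paper's: both apply supermodularity to compare the marginal gain of $x$ at an arbitrary set $A$ with its marginal gain at $\emptyset$, then use $g(\emptyset)=0$ and $g(\{x\})\geq 0$ to conclude. The only cosmetic difference is that the paper writes the inequality in lattice form $g(A)+g(\{x\})\leq g(A\cup\{x\})+g(A\cap\{x\})$ rather than the diminishing-returns form you used, but these are the same statement once one notes $A\cap\{x\}=\emptyset$.
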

\begin{proof}
  We have that $g(\emptyset)=0$ and $g(\{x\})\geq 0,\ \forall x\in V$ by assumption.
From the supermodularity of $g$, $\forall A \subset V, x \in V \setminus A$,
  \begin{align}
  g(A) + \overbrace{g(\{x\})}^{\geq 0} \leq  g(A \cup \{x\}) + \overbrace{g(A \cap \{x\})}^{=0}
 \implies g(A) \leq  g(A \cup \{x\})
  \end{align}
\qed\end{proof}
\begin{corollary}\label{thm:G+increasing}
Any $g\in\mathcal{G}_+$ is an increasing function.
\end{corollary}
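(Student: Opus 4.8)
The plan is to observe that Corollary~\ref{thm:G+increasing} is an immediate specialization of the preceding lemma, so essentially no new work is needed. First I would unpack the definition of $\mathcal{G}_+$: by construction $\mathcal{G}_+ = \{g \mid g(S) \geq 0,\ \forall S \subseteq V\} \cap \mathcal{G}$, so any $g \in \mathcal{G}_+$ is supermodular and takes non-negative values on \emph{every} subset of $V$. In particular, specializing the non-negativity condition to the singleton sets gives $g(\{x\}) \geq 0$ for all $x \in V$.

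Second, I would note that these are exactly the two hypotheses of the lemma ($g \in \mathcal{G}$ together with $g(\{x\}) \geq 0$ for all $x \in V$). Invoking the lemma then yields that $g$ is increasing, i.e.\ $g(A \cup \{x\}) \geq g(A)$ for all $A \subset V$ and $x \in V \setminus A$, which is precisely the claim. The proof is thus a one-line deduction: $g \in \mathcal{G}_+ \implies g \in \mathcal{G}$ and $g(\{x\}) \geq 0\ \forall x \implies g$ increasing.

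There is no real obstacle here; the only thing worth being careful about is making explicit that the single-set non-negativity constraint defining $\mathcal{G}_+$ subsumes the weaker singleton condition required by the lemma, so that the corollary genuinely does follow without any additional supermodularity argument. If desired, one could instead repeat the two-line telescoping argument from the lemma's proof verbatim, but citing the lemma is cleaner.
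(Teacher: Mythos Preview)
Your proposal is correct and matches the paper's approach exactly: the paper states this as an immediate corollary of the preceding lemma with no separate proof, and your argument---that membership in $\mathcal{G}_+$ in particular forces $g(\{x\})\geq 0$ for all singletons, so the lemma applies---is precisely the intended one-line deduction.
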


\begin{definition}[Convex closure]
The convex closure of a set function $\mathbf{C}\ell$ is defined for all $x\in [0,1]^{|V|}$ as
\begin{align}
\mathbf{C}\ell(x) :=& \min_{\alpha \in \mathbb{R}^{2^{|V|}}} \sum_{A\subseteq V} \alpha_A \ell(A), 
\text{ s.t. } \sum_{A \subseteq V} \alpha_{A} \mathbf{1}_A = x, 
\sum_{A \subseteq V} \alpha_{A} = 1, 
\alpha_{A} \geq 0  \quad \forall A \subseteq V ,
\end{align}
where $\mathbf{1}_A$ denotes the binary vector of length $|V|$ whose entries indexed by the elements in $A$ are $1$.
\end{definition}

In general, the convex closure is not polynomial time solvable, with the notable exception of the Lov\'{a}sz extension for submodular functions \cite{fujishige2005submodular,lovasz1983submodular}.

\begin{definition}[Restricted operator inequality]\label{def:RestrictedOperatorInequality}
For two convex extensions $\mathbf{A}$ and $\mathbf{B}$, we write $\mathbf{A} < \mathbf{B}$ (analogously $\mathbf{A} \leq \mathbf{B}$) if $\mathbf{A}\ell(x) < \mathbf{B}\ell(x) \ \forall \ell, x\in [0,1]^p$.  We may subscript the inequality sign with a class of set functions (e.g.\ $\mathbf{B} \leq_{\mathcal{S}} \mathbf{L}$) if the inequality holds for all set functions in that class.
\end{definition}

\begin{proposition}[Transitivity of restricted operator inequality]
  For two function sets $\mathcal{A}$ and $\mathcal{B}$ and three operators $\mathbf{F}$, $\mathbf{G}$, and $\mathbf{H}$,
  \begin{equation}
    \mathbf{F} \leq_{\mathcal{A}} \mathbf{G} \leq_{\mathcal{B}} \mathbf{H} \implies \mathbf{F} \leq_{\mathcal{A} \cap \mathcal{B}} \mathbf{H} .
  \end{equation}
\end{proposition}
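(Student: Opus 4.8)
The plan is to unfold Definition~\ref{def:RestrictedOperatorInequality} and reduce the claim to the ordinary transitivity of $\leq$ on $\mathbb{R}$. The statement $\mathbf{F} \leq_{\mathcal{A}} \mathbf{G}$ means, by definition, that $\mathbf{F}\ell(x) \leq \mathbf{G}\ell(x)$ holds for every $\ell \in \mathcal{A}$ and every $x \in [0,1]^p$; similarly $\mathbf{G} \leq_{\mathcal{B}} \mathbf{H}$ means $\mathbf{G}\ell(x) \leq \mathbf{H}\ell(x)$ for every $\ell \in \mathcal{B}$ and every $x \in [0,1]^p$. So the proof is a quantifier argument: fix an arbitrary $\ell$ in the intersection and an arbitrary point of the cube, and combine the two pointwise scalar inequalities.

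Concretely, I would take any $\ell \in \mathcal{A} \cap \mathcal{B}$ and any $x \in [0,1]^p$. Because $\ell \in \mathcal{A}$, the first hypothesis gives $\mathbf{F}\ell(x) \leq \mathbf{G}\ell(x)$. Because $\ell \in \mathcal{B}$, the second hypothesis gives $\mathbf{G}\ell(x) \leq \mathbf{H}\ell(x)$. Transitivity of the order relation on the reals then yields
\begin{equation}
  \mathbf{F}\ell(x) \leq \mathbf{G}\ell(x) \leq \mathbf{H}\ell(x) \implies \mathbf{F}\ell(x) \leq \mathbf{H}\ell(x) .
\end{equation}
Since $\ell \in \mathcal{A} \cap \mathcal{B}$ and $x \in [0,1]^p$ were arbitrary, this is exactly the assertion $\mathbf{F} \leq_{\mathcal{A} \cap \mathcal{B}} \mathbf{H}$.

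There is essentially no obstacle here: the only thing to be careful about is the bookkeeping of the quantifiers, namely that $\mathcal{A} \cap \mathcal{B}$ is precisely the set of $\ell$ for which \emph{both} hypotheses are simultaneously available at every $x$, so the chained inequality is licensed for exactly those $\ell$. The identical argument works verbatim with $\leq$ replaced by the strict inequality $<$, and an analogous one-sided version (only one hypothesis strict) gives $\mathbf{F} <_{\mathcal{A} \cap \mathcal{B}} \mathbf{H}$; I would remark on this only in passing if at all, since it is immediate.
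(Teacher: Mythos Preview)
Your proof is correct. The paper does not supply a proof for this proposition at all (it is stated and immediately followed by the next result), so your argument by unfolding Definition~\ref{def:RestrictedOperatorInequality} and invoking transitivity of $\leq$ on $\mathbb{R}$ is exactly the intended triviality.
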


\begin{proposition}\label{thm:CisTheGreatestConvexExtension}
$\mathbf{B} \leq \mathbf{C}$ for all convex extensions $\mathbf{B}$.
\end{proposition}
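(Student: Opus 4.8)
The plan is to exploit the defining LP of $\mathbf{C}\ell$ directly: its optimal solution exhibits $x$ as an explicit convex combination of cube vertices, after which convexity of any competing extension finishes the job via Jensen's inequality.

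First I would check that the LP defining $\mathbf{C}\ell(x)$ is feasible for every $x\in[0,1]^p$, so that $\mathbf{C}\ell(x)$ is a well-defined real number attained at some optimal $\alpha^\star$. This is immediate: taking $\alpha_A = \prod_{i\in A} x_i \prod_{i\notin A}(1-x_i)$ gives nonnegative coefficients summing to one with $\sum_{A\subseteq V}\alpha_A \mathbf{1}_A = x$ (or, more cheaply, invoke Carath\'{e}odory's theorem, since $x$ lies in the convex hull of the $2^{|V|}$ vertices of the cube).

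Next, fix an arbitrary set function $\ell$, an arbitrary convex extension $\mathbf{B}$, and a point $x\in[0,1]^p$, and let $\alpha^\star$ be optimal for the LP defining $\mathbf{C}\ell(x)$. Since $\mathbf{B}\ell$ is defined and convex on all of $[0,1]^p$ and $x = \sum_{A\subseteq V}\alpha^\star_A \mathbf{1}_A$ is a convex combination of finitely many points of $[0,1]^p$, Jensen's inequality yields $\mathbf{B}\ell(x) \leq \sum_{A\subseteq V}\alpha^\star_A\,\mathbf{B}\ell(\mathbf{1}_A)$. Because $\mathbf{B}$ is an \emph{extension}, $\mathbf{B}\ell(\mathbf{1}_A)=\ell(A)$ for every $A\subseteq V$, so the right-hand side equals $\sum_{A\subseteq V}\alpha^\star_A \ell(A) = \mathbf{C}\ell(x)$. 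As $x$ and $\ell$ were arbitrary, this is exactly $\mathbf{B}\leq\mathbf{C}$. I would also remark that $\mathbf{C}$ is itself a convex extension — at a vertex $x=\mathbf{1}_B$ the only feasible point forces $\alpha_B=1$, so $\mathbf{C}\ell(\mathbf{1}_B)=\ell(B)$, and the value function of the LP is convex in its right-hand side $x$ — hence the bound is tight and cannot be strengthened to a strict inequality.

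I do not expect a serious obstacle. The only two points needing care are (i) confirming feasibility of the LP so that $\mathbf{C}\ell$ is everywhere finite, and (ii) being explicit that the extension property pins down $\mathbf{B}\ell$ at \emph{every} vertex of the cube, which is precisely what makes the Jensen bound collapse to $\mathbf{C}\ell(x)$; both follow directly from the definitions.
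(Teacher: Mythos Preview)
Your argument is correct. Both your proof and the paper's establish the same fact---that the convex closure is the pointwise largest convex function agreeing with $\ell$ at the vertices---but the presentations differ. The paper argues geometrically and by contradiction: it identifies $\mathbf{C}\ell$ with the lower hull of the $2^{|V|}$ points $(\mathbf{1}_A,\ell(A))$ in $\{0,1\}^{|V|}\times\mathbb{R}$ and asserts that any convex function strictly above this hull at some interior point cannot pass through all of those vertices, contradicting the extension property. You instead work directly from the LP definition of $\mathbf{C}\ell(x)$, extract an optimal convex combination $\alpha^\star$, and apply Jensen's inequality to $\mathbf{B}\ell$ together with the extension property $\mathbf{B}\ell(\mathbf{1}_A)=\ell(A)$. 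Your route is more self-contained and makes the two ingredients (convexity of $\mathbf{B}\ell$ and agreement at vertices) explicit, and it avoids the somewhat informal step in the paper's argument where ``is not a lower hull and cannot contain all of these vertices'' is left unelaborated. The paper's version, on the other hand, gives a useful geometric picture. Your additional remarks on feasibility of the LP and on $\mathbf{C}$ itself being a convex extension are accurate and round out the argument nicely.
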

\begin{proof}
Assume that there exists some convex extension $\mathbf{B}\ell$ such that $\mathbf{B}\ell(x) > \mathbf{C}\ell(x)$ for some $x$ in the $p$-dimensional unit cube.  If we map all $A \subseteq V$ to vertices of the $p$-dimensional unit cube, and identify $\ell$ with $2^p$ distinct points in $\{0,1\}^p \times \mathbb{R}$, then the convex closure is the lower hull of these points taken with respect to the dimension corresponding to the function value.  As each of the $2^p$ points corresponding to the function are vertices of this lower hull, any $\mathbf{B}\ell$ that has a point greater than a corresponding point of $\mathbf{C}\ell$ is not a lower hull and cannot contain all of these vertices, which contradicts the definition of a set function extension.
\qed\end{proof}

\begin{definition}[\cite{JegelkaB2011cvpr,iyer2012-min-diff-sub-funcs}]\label{def:modularBoundsJegelkaIyer}
    The following two operators yield convex extensions for $g \in \mathcal{G}$, but the $\arg\max$ does not correspond to submodular maximization in general:
\begin{align}\label{eq:JegelkaBilmesModularConvexClosure}
  \mathbf{J}_{1}g(x) = \arg\max_{A\subseteq V} g(A) &+ \sum_{i \in V\setminus A} x_i \left( g(A\cup \{i\}) - g(A) \right) \\& - \sum_{i\in A} (1-x_i) \left( g(V) - g(V\setminus \{i\}) \right) , \nonumber
  \\
  \mathbf{J}_{2}g(x) = \arg\max_{A\subseteq V} g(A) & + \sum_{i \in V\setminus A} x_i g(\{i\})
  - \sum_{i\in A} (1-x_i) \left( g(A) - g(A\setminus \{i\}) \right) . 
\end{align}
\end{definition}

\subsection{LP relaxations}

One of the key applications of convex set function extensions is their application to (approximate) minimization of set functions through LP relaxations.  If a relaxation is a convex extension, we have the useful result that an integral minimizer of the relaxation is guaranteed to be an optimal solution to the orignal discrete optimization problem.  In Proposition~\ref{thm:CloserToCmoreLikelyToBeIntegral} we formalize the unsurprising but important result that dominating convex extensions result in integral solutions to LP relaxations with higher probability.  For some set $S \subseteq \mathbb{R}^d$, we denote $\operatorname{int}(S) := [S \cap \{0,1\}^d \neq \emptyset]$, where we have used Iverson bracket notation.  
\begin{proposition}\label{thm:CloserToCmoreLikelyToBeIntegral}
For two operators that yield convex extensions for all set functions $\ell\in\mathcal{A}$, $\mathbf{F}\leq_{\mathcal{A}} \mathbf{G} \leq_\mathcal{A} \mathbf{C}$, and for all distributions $p : \mathcal{A} \mapsto \mathbb{R}_+$,
\begin{align}
\mathbb{E}_{\ell \sim p} \left\{ \operatorname{int}\left(\arg\min_{x\in [0,1]^{|V|}} \mathbf{F}\ell(x) \right) \right\} \leq \mathbb{E}_{\ell \sim p} \left\{ \operatorname{int} \left(\arg\min_{x\in [0,1]^{|V|}} \mathbf{G}\ell(x) \right)  \right\},
\end{align}
where $\arg\min$ is defined as a map to the set of minimizers of an expression.
\end{proposition}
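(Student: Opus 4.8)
The plan is to show a pointwise (in $\ell$) implication: for every fixed $\ell \in \mathcal{A}$, if $\arg\min_{x} \mathbf{G}\ell(x)$ contains an integral point, then $\arg\min_{x} \mathbf{F}\ell(x)$ need not — but if $\arg\min_{x} \mathbf{F}\ell(x)$ contains an integral point, then so does $\arg\min_{x} \mathbf{G}\ell(x)$. Once we have the implication $\operatorname{int}(\arg\min \mathbf{F}\ell) \le \operatorname{int}(\arg\min \mathbf{G}\ell)$ as a pointwise inequality of $\{0,1\}$-valued quantities for each $\ell$, the statement about expectations follows immediately by monotonicity of expectation under an arbitrary distribution $p$, so the entire content is in this pointwise claim.

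To prove the pointwise claim, I would first record the crucial fact that both $\mathbf{F}\ell$ and $\mathbf{G}\ell$ are extensions of the same set function $\ell$, hence they agree with $\ell$ on all $2^{|V|}$ vertices of the cube; moreover, since $\mathbf{C}\ell$ is the convex closure, $\mathbf{F}\ell \le \mathbf{G}\ell \le \mathbf{C}\ell$ together with the fact (implicit in Proposition~\ref{thm:CisTheGreatestConvexExtension}) that $\mathbf{C}\ell$ coincides with $\ell$ on the vertices and is the tightest convex extension, gives us that $\min_{x\in[0,1]^{|V|}} \mathbf{G}\ell(x) = \min_{x\in\{0,1\}^{|V|}} \ell(x) = \min_A \ell(A)$; indeed any convex extension $\mathbf{B}$ satisfies $\min_x \mathbf{B}\ell(x) \le \min_A \ell(A)$ trivially (restrict to vertices) and $\min_x \mathbf{B}\ell(x) \ge \min_x \mathbf{C}\ell(x) = \min_A \ell(A)$ since $\mathbf{C}$ is the convex closure and its minimum is attained at a vertex. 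So in fact every convex extension of $\ell$ has the same minimum value, namely $\min_A \ell(A)$.

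Now suppose $x^\star$ is an integral point in $\arg\min_x \mathbf{F}\ell(x)$. Then $\mathbf{F}\ell(x^\star) = \min_A \ell(A)$, and since $x^\star$ is a vertex, $\mathbf{G}\ell(x^\star) = \mathbf{F}\ell(x^\star) = \min_A \ell(A)$ (both equal $\ell(A^\star)$ for the set $A^\star$ corresponding to $x^\star$). But $\min_A \ell(A)$ is exactly the minimum value of $\mathbf{G}\ell$ over the whole cube, so $x^\star \in \arg\min_x \mathbf{G}\ell(x)$, i.e.\ $\operatorname{int}(\arg\min_x \mathbf{G}\ell(x)) = 1$. This establishes $\operatorname{int}(\arg\min \mathbf{F}\ell(x)) \le \operatorname{int}(\arg\min \mathbf{G}\ell(x))$ for each $\ell \in \mathcal{A}$, and taking expectations over $\ell \sim p$ finishes the proof.

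I expect the only subtle point — and hence the main obstacle to state cleanly — is the claim that every convex extension of $\ell$ attains the minimum value $\min_A \ell(A)$ over $[0,1]^{|V|}$; this needs both halves (the trivial upper bound by restricting to vertices, and the lower bound via $\mathbf{B}\ell \ge \mathbf{C}\ell$ combined with the fact that the convex closure's minimum is attained at an integral point, which follows from its LP definition since an LP over a polytope attains its optimum at a vertex). Everything downstream — transferring an integral minimizer of the dominated extension up to the dominating one using agreement on vertices — is then a one-line argument, and the passage from the pointwise inequality to the expectation inequality is just monotonicity.
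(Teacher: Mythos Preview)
Your overall plan is right: reduce to the pointwise implication $\operatorname{int}(\arg\min \mathbf{F}\ell)=1 \Rightarrow \operatorname{int}(\arg\min \mathbf{G}\ell)=1$ and then take expectations. But the central lemma you rely on is false. You assert that \emph{every} convex extension $\mathbf{B}$ of $\ell$ satisfies $\min_x \mathbf{B}\ell(x)=\min_A \ell(A)$, arguing that $\min_x \mathbf{B}\ell(x)\ge \min_x \mathbf{C}\ell(x)$ ``since $\mathbf{C}$ is the convex closure.'' Proposition~\ref{thm:CisTheGreatestConvexExtension} gives the opposite inequality: $\mathbf{B}\le \mathbf{C}$, hence $\min_x \mathbf{B}\ell(x)\le \min_x \mathbf{C}\ell(x)$, and strict inequality is possible (take $|V|=1$, $\ell\equiv 0$, $\mathbf{B}\ell(x)=x^2-x$; then $\mathbf{B}$ is a convex extension with minimum $-1/4<0$). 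So the step ``$\min_A \ell(A)$ is exactly the minimum value of $\mathbf{G}\ell$ over the whole cube'' is unjustified as written.

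The repair is short and stays within your framework, but it uses the hypothesis $\mathbf{F}\le_{\mathcal A}\mathbf{G}$ in an essential way (which your current write-up never actually invokes). If $x^\star\in\{0,1\}^{|V|}$ lies in $\arg\min_x \mathbf{F}\ell(x)$, then $\mathbf{F}\ell(x^\star)=\ell(\operatorname{set}(x^\star))$, and comparing against the other vertices shows $\mathbf{F}\ell(x^\star)=\min_A\ell(A)$. From $\mathbf{F}\le \mathbf{G}$ pointwise one gets $\min_x \mathbf{G}\ell(x)\ge \min_x \mathbf{F}\ell(x)=\min_A\ell(A)$; combined with the trivial $\min_x \mathbf{G}\ell(x)\le \mathbf{G}\ell(x^\star)=\min_A\ell(A)$ this forces $x^\star\in\arg\min_x \mathbf{G}\ell(x)$. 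That is the implication you need. The paper reaches the same conclusion by a slightly different device: it fixes an integral minimizer $x^*$ of $\mathbf{C}\ell$ (using total dual integrality) and argues via directional derivatives that if $x^*\notin\arg\min \mathbf{G}\ell$ then a descent direction for $\mathbf{G}\ell$ at $x^*$ is also one for $\mathbf{F}\ell$, so $x^*\notin\arg\min \mathbf{F}\ell$. Your value-comparison route, once fixed, is arguably more elementary (no gradients, no TDI), but you must drop the false universal claim and instead derive $\min_x \mathbf{G}\ell(x)=\min_A\ell(A)$ \emph{conditionally} on $\mathbf{F}$ having an integral minimizer, using $\mathbf{F}\le \mathbf{G}$.
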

\begin{proof}
It is sufficient to demonstrate that 
\begin{align}
  \operatorname{int}\left(\arg\min_{x\in [0,1]^{|V|}} \mathbf{F}\ell(x) \right)
  \impliedby \operatorname{int}\left(\arg\min_{x\in [0,1]^{|V|}} \mathbf{G}\ell(x) \right).
\end{align}

Denote $x^*$ an integral optimum of $\arg\min_{x\in [0,1]^{|V|}} \mathbf{C}\ell(x)$,  As $\mathbf{C}\ell(x)$ is total dual integral \cite[Section~4.6]{Conforti2014IP}, we have that $\mathbf{C}\ell(x^*) = \min_{A\subseteq V} \ell(A)$.  If $\ell(x^*) = \mathbf{G}\ell(x^*) \neq \min_{x\in [0,1]^{|V|}} \mathbf{G}\ell(x)$, there must be a negative directional gradient $\nabla_{v} \mathbf{G}\ell(x^*)$ for some $v\in \mathbb{R}^{|V|}$ pointing into the unit cube from $x^*$.  From the definition of set function extensions and the fact that $\mathbf{F}\leq_{\mathcal{A}} \mathbf{G}$, we have that $0 > \nabla_{v} \mathbf{G}\ell(x^*) \geq \nabla_{v} \mathbf{F}\ell(x^*)$ and therefore $x^* \notin \arg\min_{x\in [0,1]^{|V|}} \mathbf{F}\ell(x)$.
\qed\end{proof}

\section{Slack and margin rescaling}\label{sec:SlackAndMarginRescaling}

In these sections we develop two convex extensions of increasing set functions that are based on the surrogate loss functions of two variants of the structured output support vector machine \cite{Tsochantaridis2005}. Their relationship to convex extensions of set functions is due to \cite{yu:hal-01151823}, which developed results analogous to Prop.~\ref{thm:SlackRescalingIsConvexExtension},  and~\ref{thm:MarginRescalingScaleFactor}.

\subsection{Slack rescaling}

\begin{definition}[Slack rescaling for increasing set functions \cite{Tsochantaridis2005,yu:hal-01151823,Yu2016a}]\label{def:SlackRescalingIncreasing2}
For an increasing set function $\ell$,
\begin{equation}\label{eq:slackRescalingTighter}
\mathbf{S}_{+}\ell(x) = \max_{A\subseteq V} \ell(A)\left( 1 - |A| + \sum_{i\in A} x_{i} \right) .
\end{equation}
\end{definition}

\begin{proposition}\label{thm:SpolytimeConvexExtension}
Computation of $\mathbf{S}_{+}\ell(x)$ corresponds to a non-supermodular maximization problem for $x\in [0,1]^{|V|}$ and supermodular increasing $\ell$.
\end{proposition}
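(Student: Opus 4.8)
The plan is to show two things: first, that for fixed $x \in [0,1]^{|V|}$ the objective $F_x(A) := \ell(A)\bigl(1 - |A| + \sum_{i\in A} x_i\bigr)$ viewed as a set function of $A$ is not supermodular in general, and second, that this failure is genuine — i.e., the maximization cannot be massaged into a supermodular maximization by some reformulation — by exhibiting a concrete small instance where supermodularity provably fails. Since supermodularity is a property we only need to violate, I would construct an explicit example on a base set $V$ of size $2$ or $3$, pick a supermodular increasing $\ell$ (e.g. $\ell(A) = |A|^2$, or something like $\ell(\{1\}) = \ell(\{2\}) = 1$, $\ell(\{1,2\}) = 3$, which is supermodular and increasing), fix a convenient interior point such as $x = (\tfrac12,\dots,\tfrac12)$, and directly compute the discrete second differences of $F_x$ to show one of them has the wrong sign.

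The key steps, in order: (1) Write $F_x(A) = \ell(A) \cdot c(A)$ where $c(A) := 1 - |A| + \sum_{i\in A} x_i = 1 - \sum_{i\in A}(1-x_i)$ is a modular (affine in the indicator) set function with $c(\emptyset) = 1$. (2) Observe that a product of a supermodular function and a modular function need not be supermodular — the obstruction is exactly that the modular factor $c(A)$ decreases (for $x$ in the interior) as $A$ grows, and can even become negative, so it reweights the supermodular increments nonmonotonically. (3) Formalize by computing, for the chosen instance, $F_x(\{1,2\}) - F_x(\{1\}) - F_x(\{2\}) + F_x(\emptyset)$ and showing it is negative (submodular-type behaviour) for an appropriate $\ell$, while for supermodularity of $F_x$ we would need it nonnegative; alternatively exhibit the violation on a larger ground element to make the increments-decreasing inequality fail. (4) Conclude that since $F_x$ is not supermodular, maximizing it is a non-supermodular maximization problem, as claimed; I may also remark that $\mathbf{S}_+$ is still a well-defined convex extension (this follows from the slack-rescaling origin / Prop.~\ref{thm:SlackRescalingIsConvexExtension}-type arguments referenced earlier), so the point of the proposition is specifically the computational one.

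The main obstacle I anticipate is making the statement airtight rather than merely plausible: "corresponds to a non-supermodular maximization problem" should be read as "the natural objective is not supermodular," and a skeptic could object that some other equivalent combinatorial formulation might be supermodular. I would head this off by keeping the claim modest — showing the canonical objective $F_x$ fails supermodularity for a specific supermodular $\ell$ and a specific interior $x$ — and noting that this is the relevant sense, parallel to how $\mathbf{J}_1, \mathbf{J}_2$ are described in Definition~\ref{def:modularBoundsJegelkaIyer} ("the $\arg\max$ does not correspond to submodular maximization in general"). A secondary, minor obstacle is bookkeeping: one must be careful that the chosen $\ell$ really is supermodular and increasing (check $\ell(\emptyset) = 0$, nonnegative singletons, and the supermodular inequality on all chains), and that the chosen $x$ lies in the open interior so that the modular factor $c$ is strictly decreasing; both are routine finite checks.
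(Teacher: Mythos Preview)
Your proposal is correct and takes essentially the same approach as the paper: exhibit a small explicit counterexample (the paper uses $|V|=2$, $\ell(\emptyset)=\ell(\{a\})=\ell(\{b\})=0$, $\ell(\{a,b\})=\varepsilon>0$, and $x=\mathbf{0}$, then observes $\tilde\ell=-\ell$ so the objective is strictly submodular). The only cosmetic differences are your choice of an interior $x$ and an $\ell$ with positive singletons, and your plan to check the second difference directly rather than via the $\tilde\ell=-\ell$ shortcut; both routes succeed.
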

\begin{proof}
To show that the optimization is not in general a supermodular maximization, we may consider the following counterexample:
$V = \{a,b\}$, $\ell(\emptyset) = \ell(\{a\}) = \ell(\{b\}) = 0$, $\ell(\{a,b\}) = \varepsilon$.  For $\varepsilon>0$ this is strictly supermodular.  Denote
  \begin{align}
    \tilde{\ell}(A):= \ell(A)\left( 1 - |A| + \sum_{i\in A} x_{i} \right) 
  \end{align}
  For $x=\mathbf{0}$, we have that $\tilde{\ell}(\emptyset) = \tilde{\ell}(\{a\}) = \tilde{\ell}(\{b\}) = 0$ and
$\tilde{\ell}(\{a,b\}) = \varepsilon (1 - 2 + 0) = -\varepsilon$, but this indicates that $\ell = -\tilde{\ell}$ and as $\ell$ is not modular they cannot both be supermodular.
\qed\end{proof}

\begin{proposition}[Proposition~1 of~\cite{yu:hal-01151823}]\label{thm:SlackRescalingIsConvexExtension}
Definition~\ref{def:SlackRescalingIncreasing2}\ yields a convex extension for all increasing $\ell$.
\end{proposition}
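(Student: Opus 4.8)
The plan is to verify the two defining properties of a convex extension for $\mathbf{S}_{+}\ell$ when $\ell$ is increasing: first, that $\mathbf{S}_{+}\ell$ agrees with $\ell$ on the vertices of the cube, and second, that $\mathbf{S}_{+}\ell$ is convex on $[0,1]^{|V|}$. Convexity is essentially immediate: for each fixed $A\subseteq V$, the map $x\mapsto \ell(A)\bigl(1 - |A| + \sum_{i\in A} x_i\bigr)$ is affine in $x$, hence convex, and $\mathbf{S}_{+}\ell$ is a pointwise maximum over the finitely many such affine functions, so it is convex (and piecewise linear, hence continuous). That disposes of one requirement with no real work.

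The substantive part is the vertex-consistency. Fix a vertex $x = \mathbf{1}_B$ for some $B\subseteq V$; I need to show $\mathbf{S}_{+}\ell(\mathbf{1}_B) = \ell(B)$. Evaluating the objective at the set $A = B$ gives $\ell(B)\bigl(1 - |B| + |B|\bigr) = \ell(B)$, so $\mathbf{S}_{+}\ell(\mathbf{1}_B)\geq \ell(B)$; the work is the reverse inequality, i.e. that no other $A$ does better. For general $A$, at $x=\mathbf{1}_B$ the multiplier is $1 - |A| + |A\cap B| = 1 - |A\setminus B|$. Hence the objective value is $\ell(A)\bigl(1 - |A\setminus B|\bigr)$. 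If $A\subseteq B$ then $|A\setminus B| = 0$ and the value is $\ell(A)\leq \ell(B)$ since $\ell$ is increasing and $A\subseteq B$. If $A\not\subseteq B$ then $|A\setminus B|\geq 1$, so the multiplier $1 - |A\setminus B|\leq 0$; combined with $\ell(A)\geq 0$ (which holds because an increasing set function with $\ell(\emptyset)=0$ is non-negative), the objective value is $\leq 0 \leq \ell(B)$. In every case the value is at most $\ell(B)$, giving $\mathbf{S}_{+}\ell(\mathbf{1}_B) = \ell(B)$.

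I expect the only mild subtlety to be confirming that $\ell(A)\geq 0$ for all $A$ under the ``increasing'' hypothesis, which follows from $\emptyset\subseteq A$ and $\ell(\emptyset) = 0$; this sign fact is what makes the negative-multiplier case collapse. Putting the two pieces together — affine-max convexity plus the case analysis showing vertex values are exactly $\ell(B)$ — establishes that Definition~\ref{def:SlackRescalingIncreasing2} yields a convex extension for all increasing $\ell$. No step here is a serious obstacle; the case analysis on $|A\setminus B|$ is the heart of the argument and is short.
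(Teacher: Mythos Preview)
Your proof is correct and follows essentially the same approach as the paper: convexity via a pointwise maximum of affine functions, and vertex-consistency via the same two-case analysis (subset versus non-subset) using the rewriting $1 - |A| + |A\cap B| = 1 - |A\setminus B|$ together with monotonicity and non-negativity of $\ell$. The only cosmetic difference is that the paper swaps the roles of the letters $A$ and $B$.
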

\begin{proof}
Equation~\eqref{eq:slackRescalingTighter} is convex:
The r.h.s.\ is a maximum over linear functions of $x$, and is therefore convex in $x$.

Equation~\eqref{eq:slackRescalingTighter} is an extension:
We will use the notation $\operatorname{set} : \{0,1\}^{|V|} \rightarrow \mathcal{P}(V)$ to denote the set associated with an indicator vector.
We must have that $\mathbf{S}_{+}\ell(x) = \ell(A)$ whenever $x\in \{0,1\}^{|V|}$ and $\operatorname{set}(x)=A$.  For $\operatorname{set}(x)=A$ 
\begin{equation}
\ell(A) \mathlarger{\mathlarger{\mathlarger{(}}} 1 - |A| + \overbrace{\sum_{i\in A} x_{i}}^{=|A|} \mathlarger{\mathlarger{\mathlarger{)}}} = \ell(A).
\end{equation}
We now must show that when $\operatorname{set}(x)=A$
\begin{align}\label{eq:StighterBiggerThanOtherPlanes}
\ell(A) \geq \ell(B)\left( 1 - |B| + \sum_{i\in B} x_i \right) = \ell(B)\left(1-|B| + |A \cap B| \right).
\end{align}
For any $B\nsubseteq A$: 
$\overbrace{\ell(B)}^{\geq 0}\overbrace{\left(1-|B| + |A \cap B| \right)}^{\leq 0} \leq 0 \leq \ell(A)$, 
and for any $B\subseteq A$ 
$\overbrace{\ell(B)}^{\leq \ell(A)}\overbrace{\left(1-|B| + |A \cap B| \right)}^{= 1} \leq \ell(A)$.
\qed\end{proof}
An example of the extension 
is shown in Figure~\ref{fig:SlackRescalingTighterExample}.

\begin{figure}[t]
\centering
\includegraphics[width=0.6\columnwidth]{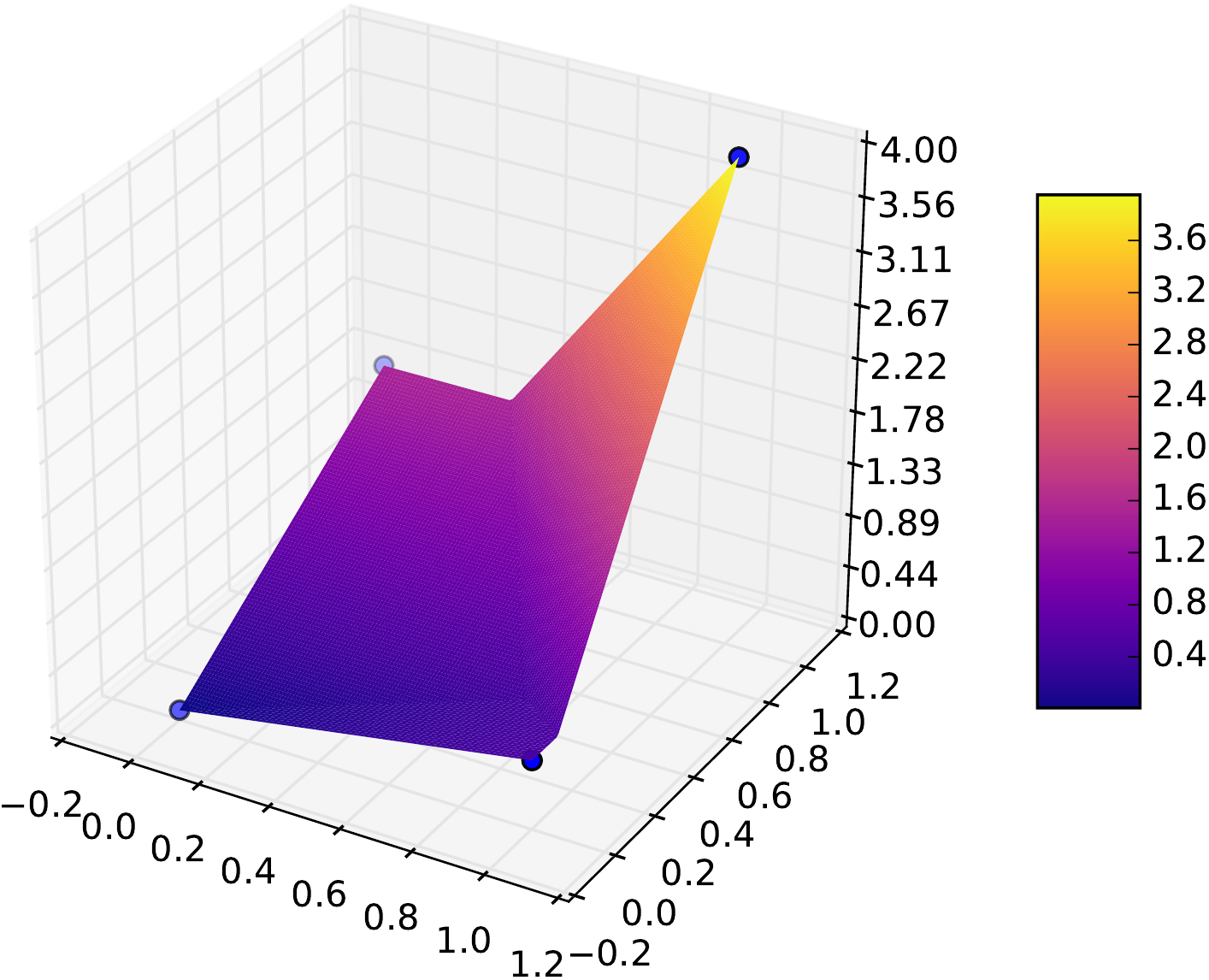}
\caption{An example of the convex extension achieved with Definition~\ref{def:SlackRescalingIncreasing2} applied to the set function $\ell(\emptyset) = 0$, $\ell(\{a\}) = 0.5$, $\ell(\{b\}) =  1.5$, $\ell(\{a,b\}) = 4$.}\label{fig:SlackRescalingTighterExample}
\end{figure}

We can see that Definition~\ref{def:SlackRescalingIncreasing2} is an instance of a general strategy for multiplicative extensions of increasing functions:
\begin{equation}\label{eq:multiplicativeExtension}
\mathbf{B} \ell(x) = \max_{A\subseteq V} \ell(A) \cdot h_{\times}(x,A)
\end{equation}
where $h_{\times} : [0,1]^{|V|} \times \mathcal{P}(V) \mapsto \mathbb{R}$ is a function convex in $x$.   We require $h_{\times}(x,A) \leq \frac{\ell(\operatorname{set}(x))}{\ell(A)}$ for integral $x$ and all $A$ in order to ensure that $\ell(A)h_{\times}(x,A) \leq \ell(\operatorname{set}(x))$, which is the general form of the inequality in Equation~\eqref{eq:StighterBiggerThanOtherPlanes}.  This strategy is quite general if $h_{\times}$ can have a dependency on $\ell$ as the convex closure is recovered simply by setting $h_{\times}(x,A) = \frac{\mathbf{C}\ell(x)}{\ell(A)}$; In Proposition~\ref{thm:SisTightOverMultiplicativeNoCallsToell} we will limit ourselves to $h_{\times}$ that have no explicit dependency on $\ell$ beyond that it is an increasing function.
For increasing $\ell$, $\frac{\ell(\operatorname{set}(x))}{\ell(A)}$ is bounded below by $1$ for all $A \subseteq \operatorname{set}(x)$ and is bounded below by $0$ for all $A \nsubseteq \operatorname{set}(x)$.  If we additionally assume that $\ell \in \mathcal{G}_+$, these bounds remain unchanged.
If we take these bounds with respect to all increasing $\ell$, or indeed all $g \in \mathcal{G}_+$, the bounds are sharp.
\begin{proposition}\label{thm:SisTightOverMultiplicativeNoCallsToell}
  $h_{\times}(x,A) = \left( 1 - |A| + \sum_{i\in A} x_{i} \right)_+$ 
  is the largest possible convex function over the unit cube satisfying the constraints that for integral $x$, $h_{\times}(x,A)\leq 1\ \forall A \subseteq \operatorname{set}(x)$, $h_{\times}(x,A) \leq 0\ \forall A \nsubseteq \operatorname{set}(x)$, and $h_{\times}(x,\operatorname{set}(x)) = 1$, where $(\cdot)_{+} = \max(0,\cdot)$.
  \end{proposition}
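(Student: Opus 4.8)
The plan is to fix a single $A\subseteq V$ and study the map $x\mapsto h_{\times}(x,A)$ in isolation. Write $k=|A|$, set $g_A(x):=\bigl(1-k+\sum_{i\in A}x_i\bigr)_+$, and let $c\colon\mathcal{P}(V)\to\mathbb{R}$ be the $0/1$ set function with $c(S)=1$ if $A\subseteq S$ and $c(S)=0$ otherwise. First I would check the routine part, that $g_A$ is admissible: it is convex, being a pointwise maximum of the constant $0$ and an affine function of $x$; and for integral $x$ with $\operatorname{set}(x)=S$ we have $\sum_{i\in A}x_i=|A\cap S|$, so $g_A(\mathbf{1}_S)=(1-k+|A\cap S|)_+$ equals $1$ exactly when $A\subseteq S$ and $0$ otherwise, i.e. $g_A(\mathbf{1}_S)=c(S)$. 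Hence $g_A$ satisfies all three stated constraints (the equality $g_A(\mathbf{1}_A)=1$ included), so $g_A$ is a convex extension of the set function $c$.

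Next I would establish maximality. Let $h$ be \emph{any} function convex on $[0,1]^{|V|}$ meeting the constraints; then the first two constraints give $h(\mathbf{1}_S)\le c(S)$ for every $S\subseteq V$. For an arbitrary $x\in[0,1]^{|V|}$ and any representation $x=\sum_S\alpha_S\mathbf{1}_S$ as a convex combination of cube vertices, Jensen's inequality yields $h(x)\le\sum_S\alpha_S h(\mathbf{1}_S)\le\sum_S\alpha_S c(S)$; minimising the right-hand side over all such representations gives $h(x)\le\mathbf{C}c(x)$, by the definition of the convex closure. (A pointwise supremum of admissible $h$ is again convex and admissible, so the claimed ``largest'' function is well defined.) It therefore suffices to prove $\mathbf{C}c=g_A$. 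One inequality, $g_A\le\mathbf{C}c$, is immediate from Proposition~\ref{thm:CisTheGreatestConvexExtension} since $g_A$ is a convex extension of $c$.

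The crux is the reverse bound $\mathbf{C}c(x)\le g_A(x)$, which I would prove by exhibiting an explicit feasible point of the LP defining $\mathbf{C}c(x)$: a distribution over subsets $S\subseteq V$ with mean $x$ whose objective $\sum_S\alpha_S c(S)=\Pr[A\subseteq S]$ equals $g_A(x)$. Sample $S=B\cup C$ with $B\subseteq A$ and $C\subseteq V\setminus A$ independent, each $i\in V\setminus A$ placed in $C$ independently with probability $x_i$ (fixing the marginals off $A$, and making $A\subseteq S\iff B=A$). Writing $A=\{a_1,\dots,a_k\}$ and $y_j:=1-x_{a_j}\in[0,1]$: if $\sum_j y_j\le1$, let $B=A$ with probability $1-\sum_j y_j=g_A(x)$ and $B=A\setminus\{a_j\}$ with probability $y_j$, which has marginals $\Pr[a_j\in B]=x_{a_j}$ and gives $\Pr[A\subseteq S]=g_A(x)$; if $\sum_j y_j>1$ (so $g_A(x)=0$), cover the circle of circumference $1$ by arcs $I_1,\dots,I_k$ with $|I_j|=y_j$, draw $U$ uniform on the circle, and set $B=A\setminus\{a_j:U\in I_j\}$, so that $B\neq A$ almost surely while $\Pr[a_j\in B]=1-|I_j|=x_{a_j}$, giving $\Pr[A\subseteq S]=0=g_A(x)$. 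In either case the induced law on the finite family $\mathcal{P}(V)$ is a feasible $\alpha$ certifying $\mathbf{C}c(x)\le g_A(x)$; hence $\mathbf{C}c=g_A$, and combining with the previous paragraph, $h\le g_A$ for every admissible $h$ while $g_A$ is itself admissible, which is the claim.

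The main obstacle is exactly this last construction: coupling the indicators $\mathbf{1}[a_j\in S]$ so that the ``bad'' events $\{a_j\notin S\}$ are as nearly disjoint as the marginals permit---pairwise disjoint when $\sum_j(1-x_{a_j})\le1$, and jointly exhaustive when $\sum_j(1-x_{a_j})>1$---which is what pins $\mathbf{C}c$ down to $g_A$ rather than to the weaker union bound $\Pr[A\subseteq S]\ge g_A(x)$ valid for every coupling. Everything else is bookkeeping; in particular no discretisation is needed, since a distribution over subsets of $V$ is already a finite convex combination.
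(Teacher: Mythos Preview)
Your proof is correct. Both you and the paper reduce to the same key identity, namely that the maximal admissible function is the convex closure $\mathbf{C}c$ of the indicator set function $c(S)=[A\subseteq S]$, and then compute this closure to be $g_A(x)=(1-|A|+\sum_{i\in A}x_i)_+$. The reduction step is essentially identical in both arguments.

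Where you diverge is in the computation of $\mathbf{C}c$. The paper argues geometrically: it observes that $\mathbf{C}c$ vanishes on the convex hull of the vertices $\{\mathbf{1}_S:S\nsupseteq A\}$, and then asserts that on the remaining region the closure is the unique affine function interpolating between the values $0$ and $1$ at the relevant vertices, namely $1-|A|+\sum_{i\in A}x_i$. This is correct but somewhat informal about why the two pieces glue to give exactly the lower hull. You instead certify $\mathbf{C}c(x)\le g_A(x)$ by an explicit primal construction: a random subset $S$ with the correct marginals and with $\Pr[A\subseteq S]=g_A(x)$, obtained via independent coin flips outside $A$ and a careful coupling inside $A$ (the singleton-removal distribution when $\sum_{j}(1-x_{a_j})\le 1$, and the circle-covering trick otherwise). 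Combined with the trivial $g_A\le\mathbf{C}c$ from Proposition~\ref{thm:CisTheGreatestConvexExtension}, this pins $\mathbf{C}c$ down exactly. Your route is more constructive and self-contained; the paper's route is shorter but relies on the reader to verify the interpolation claim. Both are valid.
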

\begin{proof}
  Each constraint is an equality constraint or an upper bound, and the values of the function are constrained only at the vertices of the unit cube, so the maximum convex function satisfying the bounds must be the lower hull of the constraint points, that is $h_{\times}(x,A) = \mathbf{C} [ A \subseteq \cdot](x)$, where we have employed Iverson bracket notation in our definition of the function to which we apply $\mathbf{C}$:
  \begin{align}
    \mathbf{C} [ A \subseteq \cdot ](x) =& \min_{\alpha \in \mathbb{R}^{2^{|V|}}} \sum_{B \subseteq V} \alpha_{B} [A\subseteq B] = \min_{\alpha \in \mathbb{R}^{2^{|V|}}} \sum_{B \supseteq A} \alpha_{B}\\
    \text{s.t. }& \sum_{B \subseteq V} \alpha_{B} \mathbf{1}_B = x, \sum_{B\subseteq V} \alpha_B = 1, \alpha_{B} \geq 0\ \forall B .
  \end{align}

  For all $|V| \in \mathbb{Z}_{+}$ and  $A\subseteq V$, we have that
  \begin{align}
    \mathbf{C} [ A \subseteq \cdot ](x) = 0,\  \forall x \in \operatorname{conv}\left( \{y \in \{0,1\}^{|V|} | \operatorname{set}(y) \nsupseteq A \} \right) .
  \end{align}
  For $x \in \operatorname{conv}\left( \{y \in \{0,1\}^{|V|} | \operatorname{set}(y) \supseteq A \lor \exists i \in V, \operatorname{set}(y) \cup \{i\}  \supseteq A \}  \right)$, we must have a linear function that interpolates between $0$ at all integral points $x$ such that $\operatorname{set}(x) \nsupseteq A$ and $1$ at points such that $\operatorname{set}(x)\supseteq A$.  This linear function is exactly $1 - |A| + \sum_{i\in A} x_i$.
\qed\end{proof}
  In practice, we do not need to threshold $h_{\times}$ at zero (i.e.\ we do not need to apply the $(\cdot)_+$ operation above) as this is redundant with the maximization in Equation~\eqref{eq:multiplicativeExtension} being achieved by $A=\emptyset$.
\begin{remark}
  The constraints in Proposition~\ref{thm:SisTightOverMultiplicativeNoCallsToell} are necessary and sufficient conditions on $h$ for this multiplicative family to yield a convex extension for all increasing $\ell$.  They are also necessary and sufficient to yield a convex extension for all $g\in \mathcal{G}_+$.
\end{remark}
\begin{corollary}[$\mathbf{S}_+$ dominates the multiplicative family of extensions]\label{thm:S+dominatesMultiplicativeFamily}
  For all $\mathbf{B}$ that can be written as in Equation~\eqref{eq:multiplicativeExtension} and $h$ having no oracle access to $\ell$,
  \begin{equation}
    \mathbf{B} \leq_{+} \mathbf{S}_+.
    \end{equation}
\end{corollary}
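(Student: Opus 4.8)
The plan is to read the corollary off Proposition~\ref{thm:SisTightOverMultiplicativeNoCallsToell} together with the Remark that immediately precedes the statement, so essentially no new computation is needed. First I would pin down the constraints that $h_{\times}$ must obey. If $\mathbf{B}\ell(x) = \max_{A\subseteq V}\ell(A)\cdot h_{\times}(x,A)$ is required to be a convex extension for \emph{every} increasing $\ell$ while $h_{\times}$ has no oracle access to $\ell$, then the extension property at a vertex $x$ forces $\ell(A)h_{\times}(x,A)\le \ell(\operatorname{set}(x))$ for all $A$, i.e.\ $h_{\times}(x,A)\le \ell(\operatorname{set}(x))/\ell(A)$; taking the infimum of the right-hand side over the class of increasing set functions (legitimate precisely because $h_{\times}$ does not depend on $\ell$) yields $h_{\times}(x,A)\le 1$ for $A\subseteq\operatorname{set}(x)$ and $h_{\times}(x,A)\le 0$ for $A\nsubseteq\operatorname{set}(x)$, and attainment of the maximum at $x$ forces $h_{\times}(x,\operatorname{set}(x))=1$. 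These are exactly the hypotheses of Proposition~\ref{thm:SisTightOverMultiplicativeNoCallsToell}, and this identification is the content of the Remark.

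Next, Proposition~\ref{thm:SisTightOverMultiplicativeNoCallsToell} states that $\bigl(1-|A|+\sum_{i\in A}x_i\bigr)_+$ is the pointwise-largest convex function on $[0,1]^{|V|}$ meeting those constraints. Hence for every admissible $h_{\times}$ we have $h_{\times}(x,A)\le \bigl(1-|A|+\sum_{i\in A}x_i\bigr)_+$ for all $x\in[0,1]^{|V|}$ and all $A\subseteq V$ (note that convexity propagates the bound from the vertices, where the constraints live, to the whole cube, via the convex-closure characterization used in the proof of that proposition). Since $\ell$ is increasing with $\ell(\emptyset)=0$ we have $\ell(A)\ge 0$, so multiplying preserves the inequality: $\ell(A)h_{\times}(x,A)\le \ell(A)\bigl(1-|A|+\sum_{i\in A}x_i\bigr)_+$. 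Taking the maximum over $A$ and using the observation made just after Proposition~\ref{thm:SisTightOverMultiplicativeNoCallsToell} that the $(\cdot)_+$ is redundant in the presence of the maximization (because $A=\emptyset$ already renders the unthresholded expression nonnegative), the right-hand side equals $\mathbf{S}_{+}\ell(x)$. Therefore $\mathbf{B}\ell(x)\le \mathbf{S}_{+}\ell(x)$ for every increasing $\ell$ and every $x$, i.e.\ $\mathbf{B}\le_{+}\mathbf{S}_{+}$.

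The only genuine obstacle is the first step: making precise why the ``no oracle access to $\ell$'' hypothesis licenses passing to the infimum over increasing functions and thereby extracting the constraints on $h_{\times}$. Once those constraints are in force, the maximality clause of Proposition~\ref{thm:SisTightOverMultiplicativeNoCallsToell} does all the remaining work, and what is left is a one-line monotonicity-plus-maximum argument. One might also remark that it suffices to impose the constraints at integral $x$, since convexity of $h_{\times}$ then yields the bound $h_{\times}\le\bigl(1-|A|+\sum_{i\in A}x_i\bigr)_+$ everywhere on $[0,1]^{|V|}$.
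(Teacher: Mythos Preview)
Your proposal is correct and follows exactly the route the paper intends: the paper gives no explicit proof of this corollary, treating it as immediate from Proposition~\ref{thm:SisTightOverMultiplicativeNoCallsToell} together with the preceding Remark, and you have simply spelled out that deduction in detail. The only addition you make beyond the paper's implicit argument is the explicit verification that the $(\cdot)_+$ truncation is harmless under the $\max$, which the paper already noted in the sentence just before the Remark.
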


\begin{definition}[$\operatorname{m}g$]
  For a set function $g$,  we may construct a modular function $m$ such that
\begin{equation}\label{eq:ModularFuncEqualSingletons}
m(\{x\}) = g(\{x\}),\ \forall x\in V,
\end{equation}
and denote this function $\operatorname{m}g$.
\end{definition}

For non-increasing supermodular $g$, we may extend the definition of slack rescaling as follows:
\begin{definition}[Slack rescaling for all supermodular $g$]\label{def:SlackRescalingNonincreasing}
Assume $g\in \mathcal{G}$.
For a modular function $m$, we will denote $\operatorname{vec}(m) \in \mathbb{R}^{|V|}$ the vectorization of $m$, and we define
\begin{equation}
\mathbf{S}g(x) = \langle \operatorname{vec}(\operatorname{m}g) , x \rangle + \max_{A\subseteq V} (g(A) - \operatorname{m}g(A)) \left( 1- |A| + \sum_{i\in A} x_i \right).
\end{equation}
\end{definition}
Examples of the convex extension achieved by Definition~\ref{def:SlackRescalingNonincreasing} are given in Figure~\ref{fig:SlackRescalingMinusModularExample}.    
We now show that not only does $\mathbf{S}$ yield an extension for non-increasing supermodular functions, it yields a tighter extension for increasing supermodular functions:
\begin{proposition}\label{prop:SgeqS+}
  $\mathbf{S}_+ \leq_{\mathcal{G}_+} \mathbf{S}$.
\end{proposition}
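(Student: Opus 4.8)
The plan is to exploit the fact that both $\mathbf{S}_+ g(x)$ and $\mathbf{S}g(x)$ are maxima over $A\subseteq V$ of the same type of affine coefficient $c_A(x):=1-|A|+\sum_{i\in A}x_i$, and to show that the single set achieving the $\mathbf{S}_+$ maximum already suffices to witness the $\mathbf{S}$ bound. Concretely, fix $x\in[0,1]^{|V|}$, write $m:=\operatorname{m}g$ (so $\langle\operatorname{vec}(m),x\rangle=\sum_{i\in V}g(\{i\})x_i$ and $m(A)=\sum_{i\in A}g(\{i\})$), and let $A^\ast$ attain $\mathbf{S}_+ g(x)=g(A^\ast)c_{A^\ast}(x)$. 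Since $\mathbf{S}g(x)\geq \langle\operatorname{vec}(m),x\rangle+(g(A^\ast)-m(A^\ast))c_{A^\ast}(x)$ by taking $A=A^\ast$ in its defining maximum, and since $g(A^\ast)c_{A^\ast}(x)=m(A^\ast)c_{A^\ast}(x)+(g(A^\ast)-m(A^\ast))c_{A^\ast}(x)$, the claim reduces after cancelling the common term to the single scalar inequality
\begin{equation}\label{eq:SgeqSplusReduction}
m(A^\ast)\,c_{A^\ast}(x)\ \leq\ \langle\operatorname{vec}(m),x\rangle=\sum_{i\in V}g(\{i\})\,x_i .
\end{equation}

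To establish \eqref{eq:SgeqSplusReduction} I would first drop the off-support singletons: because $g\in\mathcal{G}_+$ gives $g(\{i\})\geq 0$ and $x_i\geq 0$, we have $\sum_{i\in V}g(\{i\})x_i\geq\sum_{i\in A^\ast}g(\{i\})x_i$, so it is enough to prove $m(A^\ast)c_{A^\ast}(x)\leq\sum_{i\in A^\ast}g(\{i\})x_i$. Rewriting $c_{A^\ast}(x)=1-\sum_{i\in A^\ast}(1-x_i)$ and $m(A^\ast)=\sum_{i\in A^\ast}g(\{i\})$, the difference telescopes to
\begin{equation}
m(A^\ast)c_{A^\ast}(x)-\sum_{i\in A^\ast}g(\{i\})x_i=\sum_{i\in A^\ast}\bigl(g(\{i\})-m(A^\ast)\bigr)(1-x_i),
\end{equation}
which is nonpositive termwise since $g(\{i\})-m(A^\ast)=-\sum_{j\in A^\ast\setminus\{i\}}g(\{j\})\leq 0$ (again using $g(\{j\})\geq0$) and $1-x_i\geq 0$. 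The degenerate case $A^\ast=\emptyset$ gives equality throughout and is handled trivially. Chaining the two inequalities back through the reduction yields $\mathbf{S}_+ g(x)\leq\mathbf{S}g(x)$ for every $x$ and every $g\in\mathcal{G}_+$, i.e.\ $\mathbf{S}_+\leq_{\mathcal{G}_+}\mathbf{S}$.

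I do not expect a serious obstacle here; the argument is elementary once one sees that it suffices to test the $\mathbf{S}$ maximum at $A^\ast$ and that the extra modular term $\langle\operatorname{vec}(m),x\rangle$ is exactly what compensates for replacing $g$ by $g-m$ inside the rescaling. The only points requiring care are bookkeeping ones: getting the sign of the telescoped sum right, and being explicit that nonnegativity ($\mathcal{G}_+$) is used twice --- once to discard the singletons outside $A^\ast$ and once to get $m(A^\ast)\geq g(\{i\})$ for each $i\in A^\ast$. It is worth remarking that increasingness of $g$ (Corollary~\ref{thm:G+increasing}) is not needed for this inequality itself, only nonnegativity of the singleton values, though of course increasingness is what makes $\mathbf{S}_+$ a valid extension in the first place.
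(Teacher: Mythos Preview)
Your proposal is correct and follows essentially the same route as the paper. Both arguments reduce to the scalar inequality $m(A)\,c_A(x)\leq\langle\operatorname{vec}(m),x\rangle$ (you for the maximizer $A^\ast$, the paper termwise for every $A$), and your decomposition $\sum_{i\in A}(g(\{i\})-m(A))(1-x_i)\leq 0$ is algebraically the same as the paper's nonnegative remainder term $\xi=\sum_{i\in A}g(\{i\})\sum_{j\in A\setminus\{i\}}(1-x_j)$ with the sign flipped; your observation that only nonnegativity of the singletons (not supermodularity) is needed is also accurate.
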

\begin{proof}
  For $g \in \mathcal{G}_+$,
  \begin{align}
    \mathbf{S} g(x) - \mathbf{S}_+g(x) =&  \max_{A\subseteq V} \left(\langle \operatorname{vec}(\operatorname{m}g) , x \rangle + (g(A) - \operatorname{m}g(A)) \left( 1- |A| + \sum_{i\in A} x_i \right) \right)\\
    &  - \max_{A\subseteq V} g(A)\left( 1 - |A| + \sum_{i\in A} x_{i} \right). \nonumber
  \end{align}
  To show that this difference is greater than zero, we will demonstrate that the difference of each element indexed by $A \subseteq V$ is greater than zero.
  \begin{align}
    \langle \operatorname{vec}(\operatorname{m}g) , x \rangle + (g(A) - \operatorname{m}g(A)) \left( 1- |A| + \sum_{i\in A} x_i \right) 
    - g(A)\left( 1 - |A| + \sum_{i\in A} x_{i} \right) \nonumber \\
    =  \sum_{i\in V} g(\{i\})  x_i  - \sum_{i \in A} g(\{i\}) 
    + \sum_{i\in A} (1-x_i) g(\{i\}) \\
    + \overbrace{\sum_{i\in A} \sum_{j\in A} g(\{i\}) (1-x_j) - \sum_{i\in A} (1-x_i) g(\{i\})}^{=\xi \geq 0} 
    =  \sum_{i\in V \setminus A} g(\{i\})  x_i + \xi \geq 0  \label{eq:SdominatesSplus_TermDiff}
  \end{align}
\qed\end{proof}

\begin{figure}
  \centering
  \subfloat[$\ell(\emptyset) = 0$, $\ell(\{a\}) = 0.5$, $\ell(\{b\}) =  1.5$, $\ell(\{a,b\}) = 4$.]{
    \includegraphics[width=0.45\textwidth]{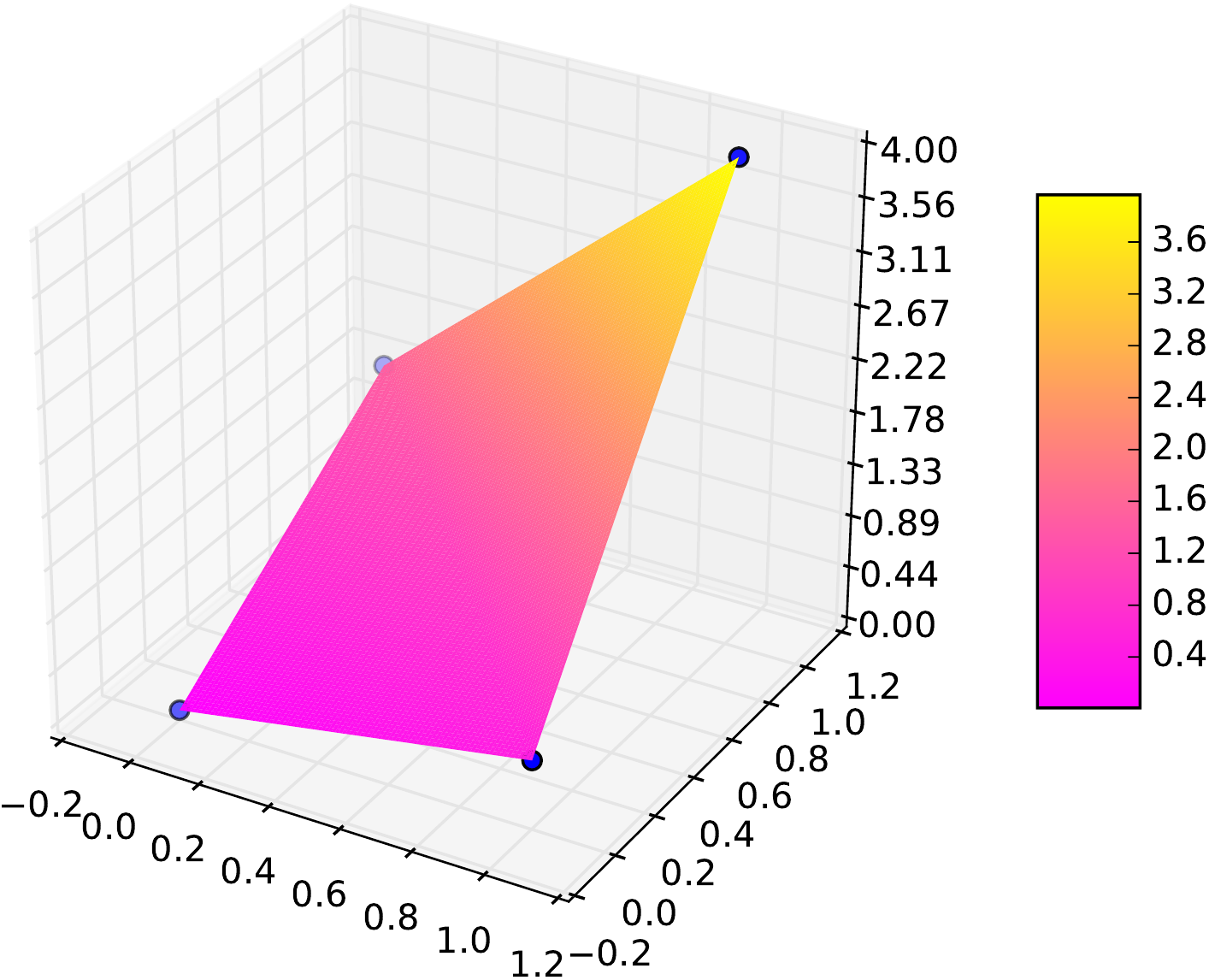}
}
~~
  \subfloat[$\ell(\emptyset) = 0$, $\ell(\{a\}) = -1.5$, $\ell(\{b\}) =  -0.5$, $\ell(\{a,b\}) = 4$.]{
    \includegraphics[width=0.45\textwidth]{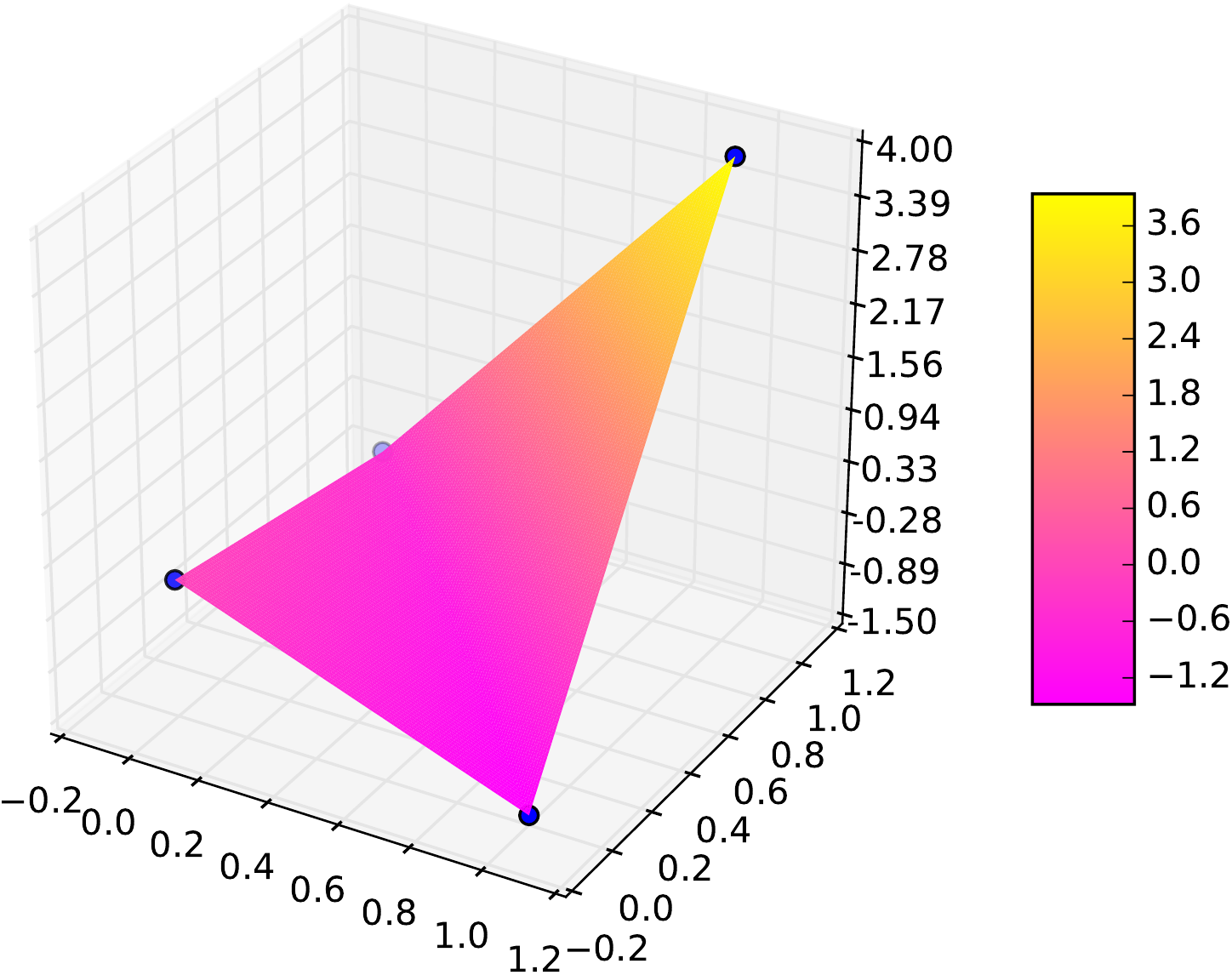}
    }
\caption{An example of the convex extension achieved with Definition~\ref{def:SlackRescalingNonincreasing} applied to positive and non-positive supermodular functions.  In this special case where $|V|=2$, the extension achieves the convex closure.}\label{fig:SlackRescalingMinusModularExample}
\end{figure}

\begin{proposition}
Slack rescaling is invariant to scaling of $g$:
\begin{equation}
\mathbf{S} g =
\gamma^{-1}\mathbf{S} (\gamma g) , \quad \forall \gamma>0 .
\end{equation}
\end{proposition}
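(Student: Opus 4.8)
The plan is to unwind the definition of $\mathbf{S}$ and push the scalar $\gamma$ through each term, using the strict positivity of $\gamma$ to commute it with the maximum.

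First I would record that the operator $\operatorname{m}$ is positively homogeneous. Under the standing convention $g(\emptyset)=0$, any modular function $m$ with $m(\emptyset)=0$ is determined by its singleton values through $m(A)=\sum_{i\in A}m(\{i\})$, so $\operatorname{m}g$ is the unique such function. Since $\operatorname{m}(\gamma g)(\{i\}) = (\gamma g)(\{i\}) = \gamma\,g(\{i\}) = \gamma\,\operatorname{m}g(\{i\})$ for every $i\in V$, it follows that $\operatorname{m}(\gamma g) = \gamma\,\operatorname{m}g$ as set functions, and hence $\operatorname{vec}(\operatorname{m}(\gamma g)) = \gamma\,\operatorname{vec}(\operatorname{m}g)$.

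Next I would substitute into $\mathbf{S}(\gamma g)(x)$. The linear term becomes $\langle \operatorname{vec}(\operatorname{m}(\gamma g)), x\rangle = \gamma\langle \operatorname{vec}(\operatorname{m}g), x\rangle$, while inside the maximum the coefficient is $(\gamma g(A) - \operatorname{m}(\gamma g)(A)) = \gamma\,(g(A) - \operatorname{m}g(A))$ and the factor $\bigl(1 - |A| + \sum_{i\in A}x_i\bigr)$ is untouched. Because $\gamma>0$, it may be factored out of the maximization, i.e. $\max_{A}\gamma\,c_A(x) = \gamma\max_A c_A(x)$, so collecting terms yields $\mathbf{S}(\gamma g)(x) = \gamma\,\mathbf{S}g(x)$ for every $x\in[0,1]^{|V|}$. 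Dividing by $\gamma$ gives the claim.

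I do not expect a genuine obstacle; the only points needing care are the homogeneity of the $\operatorname{m}$ operator and the hypothesis $\gamma>0$ rather than merely $\gamma\neq 0$, which is precisely what permits the scalar to be pulled outside the maximum instead of converting it into a minimum.
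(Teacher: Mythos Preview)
Your proposal is correct and follows essentially the same route as the paper's proof: expand the definition of $\mathbf{S}(\gamma g)(x)$, use the positive homogeneity $\operatorname{m}(\gamma g)=\gamma\,\operatorname{m}g$, and factor $\gamma$ out of both the linear term and the maximum. The paper's argument is in fact terser than yours, writing out the expansion in one line and asserting equality with $\mathbf{S}g(x)$; your explicit remarks on why $\operatorname{m}$ is homogeneous and why $\gamma>0$ (rather than $\gamma\neq 0$) is needed to commute with the $\max$ only make the argument cleaner.
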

\begin{proof}
\begin{align}
\gamma^{-1} \mathbf{S}(\gamma g)(x) =& \frac{1}{\gamma} \left( \langle \operatorname{vec}(\operatorname{m} \gamma g) , x \rangle + \max_{A\subseteq V} (\gamma g(A) - \operatorname{m} \gamma g(A)) \left( 1- \sum_{i\in A} (1-x_i) \right) \right) \\
=& \mathbf{S} g(x)
\end{align}
\qed\end{proof}

\begin{proposition}\label{prop:J2geqS}
  $\mathbf{J}_2 \geq_{\mathcal{G}} \mathbf{S}$.
\end{proposition}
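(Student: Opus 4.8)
The plan is to prove the inequality term by term over the two maximizations. Write $\mathbf{S}g(x) = \langle \operatorname{vec}(\operatorname{m}g),x\rangle + \max_{A\subseteq V} T_{\mathbf{S}}(A,x)$ with $T_{\mathbf{S}}(A,x) := (g(A) - \operatorname{m}g(A))\left(1 - |A| + \sum_{i\in A}x_i\right)$, and $\mathbf{J}_2 g(x) = \max_{A\subseteq V} T_{\mathbf{J}}(A,x)$ with $T_{\mathbf{J}}(A,x) := g(A) + \sum_{i\in V\setminus A}x_i g(\{i\}) - \sum_{i\in A}(1-x_i)\left(g(A) - g(A\setminus\{i\})\right)$ (reading the $\arg\max$ in Definition~\ref{def:modularBoundsJegelkaIyer} as the corresponding $\max$). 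It suffices to show that for every $A\subseteq V$ and every $x\in [0,1]^{|V|}$,
\[
  T_{\mathbf{J}}(A,x) \;\geq\; \langle \operatorname{vec}(\operatorname{m}g),x\rangle + T_{\mathbf{S}}(A,x) ,
\]
since maximizing over $A$ on the right picks out some $A^*$, and $\mathbf{J}_2 g(x) \geq T_{\mathbf{J}}(A^*,x)$ bounds it from below.

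Next I would simplify this fixed-$A$ inequality. Using $\operatorname{m}g(A) = \sum_{j\in A}g(\{j\})$ and $\langle \operatorname{vec}(\operatorname{m}g),x\rangle = \sum_{i\in V}g(\{i\})x_i$, the contributions of the coordinates $i\in V\setminus A$ in $T_{\mathbf{J}}$ cancel exactly with the matching part of $\langle \operatorname{vec}(\operatorname{m}g),x\rangle$; then substituting $t_i := 1 - x_i \in [0,1]$ (so that $1 - |A| + \sum_{i\in A}x_i = 1 - \sum_{i\in A}t_i$) both sides collapse, by an elementary rearrangement in the same spirit as the proof of Proposition~\ref{prop:SgeqS+}, so that the inequality is equivalent to
\[
  \sum_{i\in A} t_i \left( g(\{i\}) + g(A) - \operatorname{m}g(A) - \bigl(g(A) - g(A\setminus\{i\})\bigr)\right) \;\geq\; 0 .
\]
Since each $t_i \geq 0$, it is enough that every coefficient is nonnegative; cancelling the two occurrences of $g(A)$, this reads $g(A\setminus\{i\}) \geq \operatorname{m}g(A) - g(\{i\}) = \sum_{j\in A\setminus\{i\}} g(\{j\}) = \operatorname{m}g(A\setminus\{i\})$ for each $i\in A$.

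Finally, this last inequality is precisely superadditivity of singletons for supermodular $g$: applying the supermodular inequality to disjoint $X,Y$ gives $g(X)+g(Y)\le g(X\cup Y)+g(\emptyset)=g(X\cup Y)$ because $g(\emptyset)=0$, and induction over the elements of $A\setminus\{i\}$ yields $\sum_{j\in A\setminus\{i\}}g(\{j\})\le g(A\setminus\{i\})$, which closes the argument. I do not expect a genuine obstacle here: the only care needed is in the bookkeeping of the $A$ versus $V\setminus A$ cancellations and in noting that the argument relies on $g(\emptyset)=0$, which the paper has assumed without loss of generality.
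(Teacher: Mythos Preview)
Your proof is correct and follows the same term-by-term strategy as the paper: compare, for each fixed $A$, the contribution inside the $\max$ defining $\mathbf{J}_2$ with $\langle\operatorname{vec}(\operatorname{m}g),x\rangle$ plus the corresponding contribution inside the $\max$ defining $\mathbf{S}$. The only real difference is in how nonnegativity of the per-term gap is obtained. The paper first adds a modular function to $g$ so that (without loss of generality) the resulting function is increasing, and then reads off that the remaining coefficients $g(A\setminus\{i\})$ (and $g(\{i\})$) are nonnegative. You instead work with arbitrary $g\in\mathcal{G}$ and reduce the coefficient to $g(A\setminus\{i\})-\operatorname{m}g(A\setminus\{i\})$, which is nonnegative by the superadditivity of supermodular functions with $g(\emptyset)=0$. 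Your route is slightly more self-contained since it avoids having to justify that both $\mathbf{J}_2$ and $\mathbf{S}$ shift identically under addition of a modular function; the paper's route is marginally shorter once that reduction is granted.
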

\begin{proof}
  We first add a modular function to $g\in \mathcal{G}$ such that it is increasing. The difference between each of the corresponding $\arg\max$ elements in the definition of $\mathbf{J}_2$ and $\mathbf{S}$, respectively, is $\sum_{i\in V\setminus A} x g(\{i\}) + \sum_{i\in A} (1 - x_i) g(A\setminus \{i\})$.  Each of these terms is non-negative by assumption.
\qed\end{proof}

\begin{proposition}\label{prop:J1ngeqS}
  $\mathbf{J}_1 \ngeq_{\mathcal{G}} \mathbf{S}$, $\mathbf{S} \ngeq_{\mathcal{G}} \mathbf{J}_1$.
\end{proposition}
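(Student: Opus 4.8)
The plan is to exhibit two concrete supermodular set functions: one for which $\mathbf{J}_1 g(x) > \mathbf{S} g(x)$ at some point $x$, and one for which $\mathbf{S} g(x) > \mathbf{J}_1 g(x)$ at some point $x$. Since both $\mathbf{J}_1$ and $\mathbf{S}$ are invariant (or covariant in the obvious way) under adding a modular function, I will work with increasing supermodular functions, where the definitions simplify: for increasing $g$ with $g(\{i\}) \geq 0$, the slack-rescaling term reduces to the form $\max_A g(A)(1-|A|+\sum_{i\in A}x_i)$ up to the modular correction, while $\mathbf{J}_1 g$ uses the marginal gains $g(A\cup\{i\})-g(A)$ on the "outside" coordinates and the top-marginals $g(V)-g(V\setminus\{i\})$ on the "inside" coordinates.

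First I would look at the smallest nontrivial case, $|V|=2$. By the remark accompanying Figure~\ref{fig:SlackRescalingMinusModularExample}, $\mathbf{S}$ already equals the convex closure $\mathbf{C}$ when $|V|=2$, so by Proposition~\ref{thm:CisTheGreatestConvexExtension} we have $\mathbf{S} = \mathbf{C} \geq \mathbf{J}_1$ there, and in fact the inequality is strict at interior points whenever $g$ is strictly supermodular (since $\mathbf{J}_1 \neq \mathbf{C}$ in general — otherwise $\mathbf{J}_1$ would be polynomial-time, contradicting Definition~\ref{def:modularBoundsJegelkaIyer}; one should pick an explicit $g$, e.g.\ $g(\emptyset)=g(\{a\})=g(\{b\})=0$, $g(\{a,b\})=1$, and evaluate both extensions at $x=(1/2,1/2)$). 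This gives a witness for $\mathbf{S} \ngeq_{\mathcal{G}} \mathbf{J}_1$ being false in the "$\leq$" direction — i.e.\ it shows $\mathbf{J}_1 \ngeq_{\mathcal{G}} \mathbf{S}$. So the case $|V|=2$ handles one half.

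For the other half, $\mathbf{S} \ngeq_{\mathcal{G}} \mathbf{J}_1$, I need $|V| \geq 3$, since at $|V|=2$ we just saw $\mathbf{S}$ dominates $\mathbf{J}_1$. Here I would take $V=\{a,b,c\}$ and design $g$ so that $\mathbf{J}_1$ exploits its use of the true marginal gains $g(A\cup\{i\})-g(A)$ to produce a large value at a point where $\mathbf{S}$, which only sees $g(A)$ scaled by an affine factor that can dip below $1$, is forced to be small. A natural candidate is a function that is flat ($=0$) on all sets of size $\leq 1$ and jumps sharply on larger sets — e.g.\ $g(A)=0$ for $|A|\leq 1$, $g(A)=1$ for $|A|=2$, $g(V)=3$ (checking supermodularity: the marginal of adding an element goes $0 \to 1 \to \ldots$, and one must verify the required monotonicity of marginals; if this particular choice is not supermodular, perturb the top values). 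Evaluating at a point like $x=(2/3,2/3,2/3)$ or at a vertex-adjacent interior point, $\mathbf{J}_1 g$ picks up positive contributions from the marginal-gain terms on the coordinates outside the chosen $A$, whereas every term in $\mathbf{S} g$ with $|A|\geq 2$ is multiplied by $1-|A|+\sum_{i\in A}x_i$ which is at most $1$ and strictly less than $1$ off the relevant face.

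The main obstacle I anticipate is not the conceptual structure but the bookkeeping: one must (i) choose the two example functions so that they are genuinely supermodular, which for the $|V|=3$ case requires checking the diminishing-returns inequality across several chains, and (ii) actually compute the two maxima-over-subsets defining $\mathbf{J}_1 g(x)$ and $\mathbf{S} g(x)$ at the chosen $x$ and verify the strict inequality in each direction. For $\mathbf{S}$ one also has to be careful to include the $\langle\operatorname{vec}(\operatorname{m} g), x\rangle$ term from Definition~\ref{def:SlackRescalingNonincreasing}; picking $g$ with $g(\{i\})=0$ for all $i$ (so $\operatorname{m} g \equiv 0$) eliminates this complication and is the cleanest route. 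I would present the two counterexamples explicitly with a short table of values and two one-line computations.
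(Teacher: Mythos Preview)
Your plan for $\mathbf{S} \ngeq_{\mathcal{G}} \mathbf{J}_1$ is sound: the function $g$ on $|V|=3$ with $g=0$ on sets of size $\le 1$, $g=1$ on pairs, $g(V)=3$ is supermodular, and at $x=(2/3,2/3,2/3)$ one computes $\mathbf{S}g(x)=1/3$ while $\mathbf{J}_1 g(x)=1$.

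The gap is in the other direction. Your argument for $\mathbf{J}_1 \ngeq_{\mathcal{G}} \mathbf{S}$ relies on $\mathbf{S}=\mathbf{C}$ for $|V|=2$ together with the hope that $\mathbf{J}_1 \neq \mathbf{C}$ there. But in fact for $|V|=2$ and any $g\in\mathcal{G}$ one has $\mathbf{J}_1 g = \mathbf{C}g$ as well: writing $g(\emptyset)=0$, $g(\{a\})=a$, $g(\{b\})=b$, $g(V)=v\ge a+b$, every choice of $A$ in $\mathbf{J}_1$ returns either the plane $ax_1+bx_2$ (for $A=\emptyset$) or the plane $(a+b-v)+(v-b)x_1+(v-a)x_2$ (for $A\neq\emptyset$), and the max of these two planes is exactly the lower hull. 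In particular your proposed witness $g(\{a,b\})=1$, rest $0$, at $x=(1/2,1/2)$ gives $\mathbf{J}_1 g(x)=\mathbf{S}g(x)=0$. So no $|V|=2$ example can separate $\mathbf{J}_1$ from $\mathbf{S}$; you must go to $|V|\ge 3$ for this direction too.

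The paper's route is different: rather than arguing via $\mathbf{C}$, it computes the difference of the $A$-indexed terms,
\[
\sum_{i\in A}(1-x_i)\bigl(g(V)-g(V\setminus\{i\})-g(A)\bigr)\;-\;\sum_{j\in V\setminus A}x_j\bigl(g(A\cup\{j\})-g(A)\bigr),
\]
for $g$ increasing with $g(\{i\})=0$, and then chooses $x$ and $g$ to force the sign either way. For $\mathbf{J}_1 \ngeq \mathbf{S}$ the paper sets $x_j=0$ outside $A$ and asks for $g(V)>g(A)+g(V\setminus\{i\})$, citing the function $g(B)=0$ for $B\subsetneq V$, $g(V)=1$ with $|V|\geq 3$. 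If you want a fully explicit witness, your own $|V|=3$ example already serves both halves: at $x=(0.9,0.9,0)$ one gets $\mathbf{S}g(x)=0.8$ (from $A=\{a,b\}$) while $\mathbf{J}_1 g(x)=0.7$ (from $A=\{a\}$), so a single function handles both inequalities at different points.
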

\begin{proof}
    For $g$ increasing supermodular with $g(\{i\})=0$ $\forall i\in V$, the difference between each of the corresponding $\arg\max$ elements in the definition of $\mathbf{S}$ and $\mathbf{J}_1$, respectively, is $\sum_{i\in A} (1-x_i)(g(V) - g(V\setminus \{i\}) - g(A)) - \sum_{j\in V\setminus A} x_j ( g(A \cup \{j\}) - g(A))$.

  $\mathbf{S} \ngeq_{\mathcal{G}} \mathbf{J}_1$:
Each of the terms in the second sum is non-negative, meaning that the second term is non-positive.  Setting $x_i=1$ for all $i\in A$, $x_j>0$ for all $j\in V\setminus A$ gives the result.

$\mathbf{J}_1 \ngeq_{\mathcal{G}} \mathbf{S}$: Set $x_j=0$ for all $j\in V\setminus A$.  For the difference to be positive, we need to find an increasing $g \in \mathcal{G}$ such that $g(V) > g(A) + g(V\setminus \{i\})$ for some $A\subset V$ and $i\in V$.  Consider the function $g(A) = 0$, $\forall A \subset V$, $g(V) = 1$.
\qed\end{proof}

\subsection{Margin rescaling}

We may similarly define the following convex extension based on margin rescaling: 
\begin{definition}[Margin rescaling for supermodular $g$ \cite{Tsochantaridis2005,yu:hal-01151823}]\label{def:MarginRescalingExtension}
\begin{equation}
\mathbf{M}g(x) := \langle \operatorname{vec}(m) , x \rangle + \max_{A\subseteq V} g(A) - m(A) -|A| +  \sum_{i\in A} x_i.
\end{equation}
\end{definition}
It has been demonstrated that up to a strictly positive scale factor, margin rescaling yields an extension in the convex surrogate loss setting \cite[Proposition 2]{yu:hal-01151823}.  We prove this next and in doing so show necessary and sufficient conditions on the scaling of the loss function for $\mathbf{M}$ to yield an extension.
\begin{proposition}[Proposition 2 of \cite{yu:hal-01151823}]\label{thm:MarginRescalingScaleFactor}
  For every increasing function $\ell$, there exists a scalar $\gamma > 0$ such that $\gamma^{-1} \mathbf{M} (\gamma \ell)$ is an extension of $\ell$.  We will denote $\mathbf{M}_\gamma  :=\gamma^{-1} \mathbf{M} (\gamma \ \cdot)$ for short.
\end{proposition}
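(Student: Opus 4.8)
The plan is to treat the two requirements of an extension separately, with essentially all of the work going into the vertex-interpolation property. Convexity is immediate and does not depend on $\gamma$: for fixed $\gamma>0$, $\mathbf{M}(\gamma\ell)(x)$ is the sum of the affine map $x\mapsto\gamma\langle\operatorname{vec}(\operatorname{m}\ell),x\rangle$ and a pointwise maximum over $A\subseteq V$ of functions affine in $x$, hence convex, and multiplying by $\gamma^{-1}>0$ preserves convexity. So it remains to choose $\gamma$ so that $\gamma^{-1}\mathbf{M}(\gamma\ell)(\mathbf{1}_B)=\ell(B)$ for every $B\subseteq V$.

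Substituting $x=\mathbf{1}_B$, using $-|A|+\sum_{i\in A}x_i=-|A\setminus B|$ and $\langle\operatorname{vec}(\operatorname{m}\ell),\mathbf{1}_B\rangle=\operatorname{m}\ell(B)$, and cancelling $\gamma$, one obtains
\begin{equation}
\gamma^{-1}\mathbf{M}(\gamma\ell)(\mathbf{1}_B)=\operatorname{m}\ell(B)+\max_{A\subseteq V}\Bigl[\bigl(\ell(A)-\operatorname{m}\ell(A)\bigr)-\gamma^{-1}\,|A\setminus B|\Bigr].
\end{equation}
Writing $h:=\ell-\operatorname{m}\ell$, so that $h(\emptyset)=0$, $h(\{i\})=0$ for all $i\in V$, and $\ell(B)=\operatorname{m}\ell(B)+h(B)$, the choice $A=B$ already attains the value $h(B)$ in the maximum, so the required identity holds for every $B$ if and only if
\begin{equation}\label{eq:marginGammaCond}
\gamma\bigl(h(A)-h(B)\bigr)\le|A\setminus B|\qquad\forall A,B\subseteq V.
\end{equation}

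I would then split \eqref{eq:marginGammaCond} into two regimes. When $A\subseteq B$ the right-hand side is $0$, so the condition reads $h(A)\le h(B)$, i.e.\ monotonicity of $h$; this holds because $h$ is an increasing set function, which for supermodular $\ell$ is exactly the earlier Lemma applied to $h\in\mathcal{G}$ with grounded singletons, and is the structural content being exploited here. When $A\nsubseteq B$ the right-hand side is at least $1$, while the left-hand side is at most $\gamma\,L$ with $L:=\max_{A,B\subseteq V}\bigl(h(A)-h(B)\bigr)$ a finite constant depending only on $\ell$; hence \eqref{eq:marginGammaCond} is satisfied whenever $\gamma\le 1/\max(L,1)$, and more precisely for exactly those $\gamma$ in the interval $\bigl(0,\gamma^\star\bigr]$ where $\gamma^\star:=\min\bigl\{\,|A\setminus B|/(h(A)-h(B)) : A\nsubseteq B,\ h(A)>h(B)\,\bigr\}$ (with $\gamma^\star=+\infty$ when no such pair exists). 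This both establishes existence of a valid $\gamma>0$ and gives the promised necessary and sufficient characterization of the admissible scalings. The main obstacle is the $A\subseteq B$ case: it is the step where the hypothesis on $\ell$ is indispensable and cannot be recovered by rescaling, so the substantive part of the write-up is the verification that $h$ is increasing; the $A\nsubseteq B$ case is a routine boundedness-and-scaling argument.
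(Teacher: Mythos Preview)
Your argument follows the same skeleton as the paper's: reduce to the inequality $\gamma\bigl(h(A)-h(B)\bigr)\le |A\setminus B|$ for all $A,B$ (the paper writes it as $\gamma\le |B\setminus A|/(g(B)-g(A))$ over pairs with $g(A)<g(B)$, which is the same thing after swapping labels), and then argue that the right-hand side is bounded away from zero so some $\gamma>0$ works. Your explicit split into the cases $A\subseteq B$ and $A\nsubseteq B$ is a cleaner packaging of what the paper compresses into the single sentence ``as $g$ is increasing, we cannot simultaneously have $|A\cap B|=|B|$ and $g(A)<g(B)$.''

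One substantive point: to handle $A\subseteq B$ you invoke \emph{supermodularity} of $\ell$ so that $h=\ell-\operatorname{m}\ell\in\mathcal{G}$ with $h(\{i\})=0$ is increasing by the earlier Lemma. That hypothesis is not in the proposition as stated (which says only ``increasing $\ell$''), but it is genuinely needed: for increasing non-supermodular $\ell$ the function $h$ need not be monotone (take $V=\{a,b\}$, $\ell(\emptyset)=0$, $\ell(\{a\})=\ell(\{b\})=1$, $\ell(\{a,b\})=1$; then $h(\{a\})=0>-1=h(\{a,b\})$), and in that example no $\gamma>0$ makes $\mathbf{M}_\gamma$ an extension. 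The paper's proof relies on the same fact, hidden in the ``wlog $\ell(\{i\})=0$'' step followed by the assertion that the reduced function is still increasing; that assertion is justified precisely when $\ell\in\mathcal{G}$. So your proof is correct for the class the paper is actually working in, and you have been more explicit than the paper about where the supermodularity enters.
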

\begin{proof}
  Wlog, we may assume that $\ell(\{i\}) = 0,\ \forall i\in V$.  For $\mathbf{M}_{\gamma}$ to yield an extension, we need that for all $A, B \subseteq V$, $\operatorname{set}(x)=A$,
  \begin{equation}\label{eq:necessarySufficientMExtension}
    \gamma g(A) \geq \gamma g(B) - |B| + \sum_{i\in B} x_i = \gamma g(B) - |B| + | A \cap B|.
  \end{equation}
  \begin{align}
    &\gamma(g(A) - g(B)) + |B| - |A\cap B| \geq 0 \\
    &\iff 
     \gamma \leq  \frac{|B| - |A\cap B|}{g(B)-g(A)} \  \forall (A,B) \text{ s.t. } g(A) < g(B) \label{eq:MarginRescalingGammaBound}
  \end{align}
where the restriction to $g(A)<g(B)$ is due to the fact that $g(B)<g(A)$ reverses the direction of the inequality as we multiply both sides by $\frac{1}{g(B)-g(A)}$.    As $g$ is increasing, we cannot simultaneously have that $|A\cap B| = |B|$ and $g(A) < g(B)$ so the numerator is strictly positive and the ratio is therefore strictly positive.
\qed\end{proof}

\begin{proposition}\label{thm:MarginBiggerGammaBetter}
  For $0<\gamma_1 < \gamma_2$ both satisfying the conditions in Equation~\eqref{eq:MarginRescalingGammaBound}, $\mathbf{M}_{\gamma_1} \leq_{+} \mathbf{M}_{\gamma_2}$.
\end{proposition}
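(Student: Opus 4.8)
The plan is to reduce $\mathbf{M}_\gamma$ to a single closed form in which the dependence on $\gamma$ is isolated, and then invoke monotonicity of the pointwise maximum. First I would record the bookkeeping identity $\operatorname{m}(\gamma\ell) = \gamma\,\operatorname{m}\ell$: a modular function with value $0$ at $\emptyset$ is determined by its singleton values, and $\operatorname{m}(\gamma\ell)(\{i\}) = \gamma\ell(\{i\}) = \gamma\,\operatorname{m}\ell(\{i\})$, hence also $\operatorname{vec}(\operatorname{m}(\gamma\ell)) = \gamma\,\operatorname{vec}(\operatorname{m}\ell)$. Substituting into Definition~\ref{def:MarginRescalingExtension}, dividing by $\gamma$, and rewriting $|A| - \sum_{i\in A}x_i = \sum_{i\in A}(1-x_i)$ yields
\[
\mathbf{M}_\gamma\ell(x) = \langle \operatorname{vec}(\operatorname{m}\ell), x\rangle + \max_{A\subseteq V}\left[\, \ell(A) - \operatorname{m}\ell(A) - \tfrac{1}{\gamma}\textstyle\sum_{i\in A}(1-x_i) \,\right],
\]
where the first term carries no dependence on $\gamma$.

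Second, I would observe that for $x\in[0,1]^{p}$ every summand $1-x_i$ is non-negative, so $\sum_{i\in A}(1-x_i)\geq 0$ for every $A\subseteq V$. Since $0<\gamma_1<\gamma_2$ gives $\gamma_1^{-1} > \gamma_2^{-1}$, for each fixed $A$ we have
\[
\ell(A) - \operatorname{m}\ell(A) - \tfrac{1}{\gamma_1}\textstyle\sum_{i\in A}(1-x_i) \;\leq\; \ell(A) - \operatorname{m}\ell(A) - \tfrac{1}{\gamma_2}\textstyle\sum_{i\in A}(1-x_i).
\]
Taking the maximum over $A$ on both sides preserves the inequality, and adding the common term $\langle\operatorname{vec}(\operatorname{m}\ell),x\rangle$ to both sides gives $\mathbf{M}_{\gamma_1}\ell(x)\leq\mathbf{M}_{\gamma_2}\ell(x)$. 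As $x\in[0,1]^{p}$ and the increasing function $\ell$ were arbitrary, this is precisely $\mathbf{M}_{\gamma_1}\leq_{+}\mathbf{M}_{\gamma_2}$.

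The hypothesis that both $\gamma_1$ and $\gamma_2$ satisfy Equation~\eqref{eq:MarginRescalingGammaBound} is used only to guarantee (via Proposition~\ref{thm:MarginRescalingScaleFactor}) that $\mathbf{M}_{\gamma_1}$ and $\mathbf{M}_{\gamma_2}$ are genuine convex extensions, so that the comparison is one between two convex extensions in the sense of Definition~\ref{def:RestrictedOperatorInequality}; the inequality itself needs nothing beyond $\gamma_1,\gamma_2>0$. I do not expect a real obstacle here — the only points requiring care are the identity $\operatorname{m}(\gamma\ell)=\gamma\,\operatorname{m}\ell$ and the non-negativity of $\sum_{i\in A}(1-x_i)$ on the cube, after which the statement follows immediately from monotonicity of the maximum.
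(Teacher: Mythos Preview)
Your argument is correct and is essentially the same as the paper's: both reduce to the termwise observation that $\bigl(\tfrac{1}{\gamma_2}-\tfrac{1}{\gamma_1}\bigr)\bigl(-|A|+\sum_{i\in A}x_i\bigr)\geq 0$ on the unit cube, then pass to the maximum over $A$. The only cosmetic difference is that you carry the modular correction $\operatorname{m}\ell$ through explicitly via the identity $\operatorname{m}(\gamma\ell)=\gamma\,\operatorname{m}\ell$, whereas the paper simply absorbs it by a ``wlog'' reduction to the case $\ell(\{i\})=0$.
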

\begin{proof}
  Wlog, we will assume that $g \in \mathbf{G}_+$ and ignore the modular transformation in Definition~\ref{def:MarginRescalingExtension}.
  \begin{align}
    0 \leq & \frac{1}{\gamma_{2}} \max_{A\subseteq V} \left( \gamma_{2} g(A) -|A| +  \sum_{i\in A} x_i \right) - \frac{1}{\gamma_1} \max_{A\subseteq V} \left( \gamma_1 g(A) -|A| +  \sum_{i\in A} x_i \right)
    \\
    \impliedby 0 \leq &  \frac{1}{\gamma_{2}} \left( \gamma_{2} g(A) -|A| +  \sum_{i\in A} x_i \right) - \frac{1}{\gamma_1} \left( \gamma_1 g(A) -|A| +  \sum_{i\in A} x_i \right)\\
    =& \underbrace{\left( \frac{1}{\gamma_2} - \frac{1}{\gamma_1} \right)}_{< 0} \underbrace{\left(-|A| + \sum_{i\in A} x_i \right)}_{\leq 0}
  \end{align}
\qed\end{proof}

\begin{proposition}\label{prop:OptimalGammaforM}
  For $g \in \mathcal{G}_+$, the optimal $\gamma$ satisfying Equation~\eqref{eq:MarginRescalingGammaBound} is
  \begin{equation}\label{eq:MarginRescalingOptimalScalefactorSupermodularIncreasing}
    \min_{A,B \subseteq V, g(B)>g(A)} \frac{|B| - |A\cap B|}{g(B) - g(A)} = \left(g(V) - \min_{i\in V} g(V \setminus \{i\})\right)^{-1} .
  \end{equation}
\end{proposition}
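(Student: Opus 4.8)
The plan is to turn the statement into a pure optimization identity and then prove it by a two‑sided inequality. By Proposition~\ref{thm:MarginBiggerGammaBetter} a larger admissible $\gamma$ always yields a dominating (hence tighter) extension, and by Proposition~\ref{thm:MarginRescalingScaleFactor} the admissible set is exactly those $\gamma>0$ obeying Equation~\eqref{eq:MarginRescalingGammaBound}; since $V$ is finite, the optimal (largest) admissible value is the finite minimum $\gamma^\star := \min_{A,B\subseteq V,\ g(B)>g(A)} \frac{|B|-|A\cap B|}{g(B)-g(A)}$. It then remains to show $\gamma^\star = c$, where $c := \big(g(V)-\min_{i\in V} g(V\setminus\{i\})\big)^{-1}$, equivalently $1/c = \max_{i\in V}\big(g(V)-g(V\setminus\{i\})\big)$. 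Throughout I use that $g$ is increasing by Corollary~\ref{thm:G+increasing}.

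For $\gamma^\star \le c$ I would simply evaluate the ratio at the single pair $B=V$, $A = V\setminus\{i^\star\}$ with $i^\star \in \arg\min_{i\in V} g(V\setminus\{i\})$: here $|B|-|A\cap B| = 1$ and (unless $g\equiv 0$, see below) $g(A) < g(B)$ with $g(B)-g(A) = g(V)-g(V\setminus\{i^\star\}) = 1/c$, so $\gamma^\star \le c$. For the reverse inequality $\gamma^\star \ge c$ I would fix an arbitrary admissible pair $(A,B)$ with $g(A)<g(B)$, put $C := A\cap B$ and $k := |B|-|A\cap B| = |B\setminus C|$; note $k\ge 1$, since $k=0$ would give $B\subseteq A$ and hence $g(B)\le g(A)$ by monotonicity. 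Monotonicity also gives $g(B)-g(A) \le g(B)-g(C)$. Enumerating $B\setminus C = \{b_1,\dots,b_k\}$ and telescoping along this order, $g(B)-g(C) = \sum_{j=1}^{k}\big(g(S_j\cup\{b_j\}) - g(S_j)\big)$ with $S_j := C\cup\{b_1,\dots,b_{j-1}\}\subseteq V\setminus\{b_j\}$; supermodularity bounds each summand by $g(V)-g(V\setminus\{b_j\}) \le 1/c$ (marginal gains are largest when conditioning on the largest set). Hence $g(B)-g(A)\le k/c$, so $\frac{|B|-|A\cap B|}{g(B)-g(A)} \ge kc/k = c$; taking the minimum over all admissible pairs gives $\gamma^\star \ge c$.

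The one point that needs care — and the only place I anticipate friction — is the degenerate case in which no pair $(A,B)$ with $g(A)<g(B)$ exists. I would show this forces $g$ to be constant, hence $g\equiv 0$: by downward induction on $|V\setminus S|$, if all top marginals $g(V)-g(V\setminus\{i\})$ vanish then supermodularity plus monotonicity force every marginal $g(S\cup\{i\})-g(S)$ to vanish, so $g$ is constant and equals $g(\emptyset)=0$. In that case the index set of the minimum is empty (value $+\infty$, consistent with every $\gamma>0$ being admissible) and $g(V)-\min_i g(V\setminus\{i\}) = 0$, so both sides of the claimed identity are $+\infty$ and agree. Assuming $g\not\equiv 0$, the pair used in the first step is strictly admissible and all reciprocals above are of strictly positive quantities, so the two‑sided argument is valid and yields $\gamma^\star = c$.
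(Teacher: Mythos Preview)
Your proof is correct and, in essence, uses the same supermodularity facts as the paper: monotonicity lets you replace $A$ by $A\cap B$, and supermodularity bounds every marginal $g(S\cup\{b\})-g(S)$ by the top marginal $g(V)-g(V\setminus\{b\})$. The paper frames this as identifying the minimizing pair---first arguing by a small induction that for fixed $B$ the best $A$ is $B\setminus\{i\}$, then that the best $B$ is $V$---whereas you prove the identity by a direct two-sided bound, telescoping $g(B)-g(A\cap B)$ and bounding each increment. Your route is slightly more self-contained: it makes the reduction to $A\subseteq B$ explicit (the paper leaves this implicit), and it treats the degenerate constant case $g\equiv 0$, which the paper omits. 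The tradeoff is only stylistic; both arguments pivot on the same inequality $g(S\cup\{b\})-g(S)\le g(V)-g(V\setminus\{b\})$.
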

\begin{proof}
  For a given $B$ in Eq.~\eqref{eq:MarginRescalingOptimalScalefactorSupermodularIncreasing}, we may show by induction that $A$ must be $B\setminus \{i\}$ for $i$ that maximizes $g(B)-g(A)$. The inductive step can be shown from the property that by supermodularity $g(B\setminus \{i\}) - g(B\setminus \{i,j\}) \leq g(B) - g(B\setminus \{j\}) \leq g(B) - g(B\setminus \{i\}) \implies g(B) - g(B\setminus \{i\}) \geq g(B\setminus \{i\}) - g(B\setminus \{i,j\})$.  This implies $g(B) - g(B\setminus \{i,j\}) \leq 2(g(B) - g(B\setminus \{i\})$, so any $A$ smaller than $|B|-1$ will increase $|B|-|A \cap B|$ more than the increase in the denominator.  From supermodularity, the maximum of $g(B) - g(B\setminus\{i\})$ is achieved for $B=V$.
\qed\end{proof}
\begin{remark}
  We note that Equation~\eqref{eq:MarginRescalingOptimalScalefactorSupermodularIncreasing} bears a strong similarity to the notion of submodular curvature \cite{CONFORTI1984251,NIPS2013_4989,Vondrak2010curvature}, and connections to curvature are interesting directions for future research.
\end{remark}

\begin{proposition}\label{prop:MisPolyTime}
  $\mathbf{M}_\gamma g(x)$ is polynomial time computable for $g\in \mathcal{G}$ polynomial time computable and $x \in [0,1]^{|V|}$.
\end{proposition}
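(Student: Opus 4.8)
The plan is to reduce one evaluation of $\mathbf{M}_\gamma g(x)$ to a linear number of oracle calls to $g$ together with a single submodular function minimization, which is the only nontrivial ingredient. Unfolding Definition~\ref{def:MarginRescalingExtension} together with the identity $\mathbf{M}_\gamma g = \gamma^{-1}\mathbf{M}(\gamma\,\cdot)$ and writing $m := \operatorname{m}g$ (so $\operatorname{m}(\gamma g) = \gamma m$), we obtain
\[
\mathbf{M}_\gamma g(x) = \langle \operatorname{vec}(m), x\rangle + \gamma^{-1}\max_{A\subseteq V}\Big[ \gamma g(A) - \gamma m(A) - |A| + \sum_{i\in A} x_i \Big].
\]
Hence it suffices to show that (i) an admissible $\gamma>0$ can be produced in polynomial time, (ii) the modular term $\langle\operatorname{vec}(m),x\rangle$ is cheap, and (iii) the inner maximization is polynomial time.

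For (i): since $(g-m)(\emptyset)=0$ and $(g-m)(\{i\})=0$ for all $i\in V$, the function $g-m$ is increasing by the lemma preceding Corollary~\ref{thm:G+increasing}, hence $g-m\in\mathcal{G}_+$, and Proposition~\ref{prop:OptimalGammaforM} applied to $g-m$ yields the admissible value $\gamma = \big((g-m)(V) - \min_{i\in V}(g-m)(V\setminus\{i\})\big)^{-1}$, which is strictly positive and finite by Propositions~\ref{thm:MarginRescalingScaleFactor} and~\ref{prop:OptimalGammaforM}. Producing $m$ and this $\gamma$ requires only $O(|V|)$ evaluations of $g$. For (ii), $\langle\operatorname{vec}(m),x\rangle = \sum_{i\in V} g(\{i\}) x_i$ is evaluated with $|V|$ further oracle calls and $O(|V|)$ arithmetic.

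Step (iii) is the crux. Set $h(A) := \gamma g(A) - \gamma m(A) - |A| + \sum_{i\in A} x_i = \gamma g(A) - \sum_{i\in A}\big(\gamma g(\{i\}) + 1 - x_i\big)$. Since $m$, $|\cdot|$ and $\sum_{i\in\cdot} x_i$ are modular in $A$ and $\gamma>0$, the function $h$ is supermodular (positive scaling and addition of a modular function preserve supermodularity) and is evaluable in polynomial time because $g$ is. Thus $\max_{A\subseteq V} h(A)$ is a supermodular maximization, equivalently the minimization of the submodular function $-h$, which is solvable in time polynomial in $|V|$ and the bit-length of the function values by standard combinatorial or ellipsoid-based submodular-minimization algorithms \cite{fujishige2005submodular}. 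Substituting an optimal $A$ back into the displayed expression gives $\mathbf{M}_\gamma g(x)$.

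The only real obstacle is recognizing step (iii) as a submodular minimization problem; once the inner objective is written as $\gamma g$ minus a modular function this is immediate, in contrast to $\mathbf{J}_1$, $\mathbf{J}_2$, $\mathbf{S}$ and $\mathbf{S}_+$, whose inner problems are \emph{not} supermodular maximizations (cf.\ Proposition~\ref{thm:SpolytimeConvexExtension} and Definition~\ref{def:modularBoundsJegelkaIyer}). A minor point worth stating explicitly is that ``$g$ polynomial time computable'' is to be read in the value-oracle model with rationally encoded values of polynomial length, so that the cited routines run in polynomial time and $\gamma,\gamma^{-1}$ have polynomial encoding length.
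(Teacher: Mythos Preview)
Your proof is correct and follows exactly the same approach as the paper's own proof: compute $m=\operatorname{m}g$ and $\gamma$ with a linear number of oracle calls, then observe that the inner $\arg\max$ is a supermodular maximization (supermodular plus modular) and hence polynomial-time via submodular minimization. The paper states this in two sentences; you have spelled out the details more carefully (in particular, making explicit that $g-m\in\mathcal{G}_+$ so that Proposition~\ref{prop:OptimalGammaforM} applies, and noting the value-oracle/bit-length assumptions), but there is no substantive difference in strategy.
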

\begin{proof}
  Computation of $\operatorname{m} g$ and $\gamma$ require a linear number of calls to $g$.  The $\arg\max$ in the definition of $\mathbf{M}_{\gamma}$ is a maximization of the sum of supermodular and modular functions.
\qed\end{proof}

As in Equation~\eqref{eq:multiplicativeExtension}, we may view margin rescaling as a special case of an additive strategy for computing convex extensions in which
\begin{equation}\label{eq:additiveExtension}
\mathbf{B} \ell(x) = \frac{1}{\gamma} \max_{A \subseteq V} \left( \gamma \ell(A) + h_{+}(x,A) \right)
\end{equation}
where as in Equation~\eqref{eq:multiplicativeExtension}, $h_{+}$ is some convex function.  We have seen from Proposition~\ref{thm:MarginRescalingScaleFactor} that $\gamma$ is necessary to guarantee that the operator yields an extension for members of this family.  We again restrict ourselves to $h_{+}$ that do not have access to $\ell$ except the assumption that it is an increasing function satisfying scaling constraints. 
\begin{proposition}\label{thm:MarginDominatesAdditive}
Margin rescaling, $\mathbf{M}_{\gamma}$, dominates the family of additive extensions following Equation~\eqref{eq:additiveExtension}, where $h_{+}(x,A)$ has no oracle access to $\ell$.
\end{proposition}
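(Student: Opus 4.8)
The plan is to follow the template of the multiplicative case (Proposition~\ref{thm:SisTightOverMultiplicativeNoCallsToell}, Corollary~\ref{thm:S+dominatesMultiplicativeFamily}): show that the margin kernel $h_{+}^{M}(x,A):=-|A|+\sum_{i\in A}x_i=-\sum_{i\in A}(1-x_i)$ is the largest admissible additive kernel, so that every member of the family~\eqref{eq:additiveExtension} is dominated by it termwise. First I would dispose of the scale factor. Both operators are covariant under $\ell\mapsto c\,\ell$ for $c>0$: margin rescaling because $\mathbf{M}_\gamma=\gamma^{-1}\mathbf{M}(\gamma\cdot)$ and the optimal $\gamma$ (cf.\ Proposition~\ref{prop:OptimalGammaforM} and Equation~\eqref{eq:MarginRescalingGammaBound}) scales as $c^{-1}$; the additive family by the same computation that proves $\mathbf{S}$ scale-invariant. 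So we may assume $\ell$ is rescaled so that the scaling constraints hold with $\gamma=1$, reducing the claim to $\mathbf{B}\ell(x)=\max_{A}\bigl(\ell(A)+h_{+}(x,A)\bigr)\le\max_{A}\bigl(\ell(A)+h_{+}^{M}(x,A)\bigr)=\mathbf{M}_\gamma\ell(x)$.

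For this it suffices to prove $h_{+}(x,A)\le h_{+}^{M}(x,A)$ for all $x\in[0,1]^{|V|}$ and $A\subseteq V$, and then invoke monotonicity of the maximum. At a vertex $\mathbf{1}_C$ the extension property of $\mathbf{B}$ gives $\ell(A)+h_{+}(\mathbf{1}_C,A)\le\ell(C)$ for every admissible increasing $\ell$, i.e.\ $h_{+}(\mathbf{1}_C,A)\le\ell(C)-\ell(A)$. Taking the infimum over admissible $\ell$: when $A\subseteq C$ the test function $\ell(S)=[A\subseteq S]$ (increasing, supermodular, normalized) forces $h_{+}(\mathbf{1}_C,A)\le0=h_{+}^{M}(\mathbf{1}_C,A)$, and when $A\nsubseteq C$ the test function $\ell(S)=|S\setminus C|$ forces $h_{+}(\mathbf{1}_C,A)\le-|A\setminus C|=h_{+}^{M}(\mathbf{1}_C,A)$, with the degenerate cases ($A=\emptyset$, $C\in\{\emptyset,V\}$) handled by a limiting argument. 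Since $h_{+}(\cdot,A)$ is convex while $h_{+}^{M}(\cdot,A)$ is affine and matches these upper bounds at every vertex, writing $x$ as a convex combination of vertices and applying Jensen propagates the inequality to the interior. The same two steps establish that $h_{+}^{M}$ is the largest admissible additive kernel, the additive counterpart of Proposition~\ref{thm:SisTightOverMultiplicativeNoCallsToell}.

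The delicate point, and the step I expect to require the most care, is the handling of $\gamma$. Unlike the multiplicative family there is an explicit, $\ell$-dependent scale factor, and a more negative kernel can be paired with a larger admissible $\gamma$; one therefore has to be precise about what ``satisfying scaling constraints'' means and verify (i) that the reduction to $\gamma=1$ is legitimate for both operators simultaneously, and (ii) that the class of admissible $\ell$ against which the vertex bounds are taken is exactly the class for which $\mathbf{B}$ is guaranteed to be an extension, so that the extremal test functions $[A\subseteq S]$ and $|S\setminus C|$ are actually in scope. A secondary check is attainment of the vertex maxima — i.e.\ $h_{+}(\mathbf{1}_C,C)=0$ rather than merely $\le0$ — which is needed so that $\mathbf{B}$ is a genuine extension and not merely a convex under-estimator, and which anchors the convex-hull step with a matching vertex value of $h_{+}^{M}$.
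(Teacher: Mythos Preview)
Your proposal is correct and follows essentially the same approach as the paper: derive vertex constraints on $h_+$ from the extension property, show the margin kernel $-|A|+\sum_{i\in A}x_i$ saturates them, and invoke the convex-hull argument from Proposition~\ref{thm:SisTightOverMultiplicativeNoCallsToell}. The only presentational difference is that the paper reads off the vertex bounds directly from the scaling inequality~\eqref{eq:necessarySufficientMExtension} (which already says $\gamma(\ell(C)-\ell(A))\ge -|A|+|C\cap A|$), whereas you exhibit explicit witness functions $[A\subseteq S]$ and $|S\setminus C|$ achieving those bounds; and the paper appeals verbatim to Proposition~\ref{thm:SisTightOverMultiplicativeNoCallsToell} (``shifted downwards by one'') where you spell out the Jensen step. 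Your caution about the handling of $\gamma$ is apt: the paper is equally terse on this point, and your reduction to $\gamma=1$ via scale covariance is the natural way to make it precise.
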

\begin{proof}
For Equation~\eqref{eq:additiveExtension} to yield an extension, we require that $\gamma \ell(A) + h_{+}(x,A) \geq \gamma \ell(\operatorname{set}(x))$ for integral $x \in [0,1]^{|V|}$, which is satisfied for $h_{+}(x,A) \geq \gamma (\ell(\operatorname{set}(x)) - \ell(A))$.  For increasing $\ell$, $\gamma (\ell(\operatorname{set}(x)) - \ell(A))$ is bounded below by $0$ for all $A \subseteq \operatorname{set}(x)$, and for $A \nsubseteq \operatorname{set}(x)$ we have from Equation~\eqref{eq:necessarySufficientMExtension} that $\gamma(\ell(\operatorname{set}(x)) - \ell(A)) \geq -|A| + |\operatorname{set}(x) \cap A|$.
These yield a set of constraints similar to those in Proposition~\ref{thm:SisTightOverMultiplicativeNoCallsToell} but shifted downwards by one, and without thresholding to zero. In Proposition~\ref{thm:SisTightOverMultiplicativeNoCallsToell} the maximal convex function satisfying the constraints had the form $\left(1 -|A| + \sum_{i\in A} x_i\right)_+$, and we may conclude that the optimal function for the additive family of extensions is achieved by $h_{+}(x,A) = -|A| + \sum_{i\in A} x_i$.
This indicates that $\mathbf{M}_\gamma \ell$ with maximal $\gamma$ satisfying Equation~\eqref{eq:MarginRescalingGammaBound} (Proposition~\ref{thm:MarginBiggerGammaBetter}) is the additive convex extension closest to the convex closure of $\ell$.
\qed\end{proof}

\begin{proposition}\label{prop:J1geqM}
  $\mathbf{J}_1 \geq_{\mathcal{G}} \mathbf{M}_\gamma$.
\end{proposition}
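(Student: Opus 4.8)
The plan is to reduce to the increasing, singleton-normalized case and then compare the two extensions summand-by-summand over the index $A \subseteq V$. First I would record that both operators obey the same modular-shift identity $\mathbf{F}(g+m_0)(x) = \mathbf{F}g(x) + \langle \operatorname{vec}(m_0), x\rangle$ for every modular $m_0$: for $\mathbf{M}_\gamma$ this is immediate from Definition~\ref{def:MarginRescalingExtension} and linearity of $\operatorname{m}(\cdot)$, and for $\mathbf{J}_1$ it follows by substituting $g(A\cup\{i\}) - g(A) \mapsto g(A\cup\{i\}) - g(A) + m_0(\{i\})$ and $g(V) - g(V\setminus\{i\}) \mapsto g(V) - g(V\setminus\{i\}) + m_0(\{i\})$ and checking that the $m_0$-contributions collapse to $\sum_{i\in V} x_i\, m_0(\{i\})$ independently of $A$. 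Hence it suffices to prove the inequality after adding a modular function making $g$ increasing with $g(\{i\}) = 0$ for all $i \in V$; then $\operatorname{m}g \equiv 0$ and $\mathbf{M}_\gamma g(x) = \max_{A\subseteq V}\big( g(A) - \gamma^{-1}\sum_{i\in A}(1-x_i)\big)$.

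Since both $\mathbf{J}_1 g$ and $\mathbf{M}_\gamma g$ are pointwise maxima over $A \subseteq V$, it is enough to show that the $\mathbf{J}_1$-summand at each fixed $A$ dominates the $\mathbf{M}_\gamma$-summand at the same $A$. Cancelling the common $g(A)$, this is
\[
\sum_{i\in V\setminus A} x_i\big(g(A\cup\{i\}) - g(A)\big) \;-\; \sum_{i\in A}(1-x_i)\big(g(V) - g(V\setminus\{i\})\big) \;\geq\; -\,\gamma^{-1}\sum_{i\in A}(1-x_i).
\]
The first sum is nonnegative because $g$ is increasing, so we may discard it; and since $1 - x_i \geq 0$ on $[0,1]^{|V|}$, what remains follows from the pointwise bound $g(V) - g(V\setminus\{i\}) \leq \gamma^{-1}$ for each $i \in V$.

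Finally I would observe that this bound is exactly the admissibility of the scale factor: instantiating Equation~\eqref{eq:MarginRescalingGammaBound} with $B = V$ and $A = V \setminus \{i\}$ gives $\gamma \leq (g(V) - g(V\setminus\{i\}))^{-1}$ when $g(V) > g(V\setminus\{i\})$, and the bound is trivial otherwise; equivalently one may cite Proposition~\ref{prop:OptimalGammaforM}, so that the optimal $\gamma$ has $\gamma^{-1} = g(V) - \min_i g(V\setminus\{i\}) \geq g(V) - g(V\setminus\{i\})$, and then invoke Proposition~\ref{thm:MarginBiggerGammaBetter} to pass from the optimal $\gamma$ to every admissible one. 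I do not anticipate a real obstacle here: the only care needed is the bookkeeping in the modular reduction for $\mathbf{J}_1$, and the recognition that $g(V) - g(V\setminus\{i\}) \leq \gamma^{-1}$ is simply the $B=V$, $A=V\setminus\{i\}$ instance of the scale-factor constraint, not a fact requiring a fresh supermodularity argument.
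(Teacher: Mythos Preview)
Your proposal is correct and follows essentially the same route as the paper: reduce to the increasing, singleton-normalized case, then compare the $\mathbf{J}_1$- and $\mathbf{M}_\gamma$-summands at each fixed $A$, using that $g$ is increasing for the first sum and the scale-factor bound $g(V)-g(V\setminus\{i\})\leq \gamma^{-1}$ (equivalently Proposition~\ref{prop:OptimalGammaforM}) for the second. Your write-up is more explicit than the paper's --- you spell out the modular-shift identity for $\mathbf{J}_1$ and treat general admissible $\gamma$ rather than tacitly rescaling $g$ so that the optimal $\gamma$ equals $1$ --- but the argument is the same.
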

\begin{proof}
    For $g$ increasing supermodular with $g(\{i\})=0$ $\forall i\in V$ and scaling satisfying Proposition~\ref{prop:OptimalGammaforM}, the difference between each of the corresponding $\arg\max$ elements in the definition of $\mathbf{M}$ and $\mathbf{J}_1$, respectively, is
 $ -\sum_{i \in V\setminus A} x_i \left( g(A\cup \{i\}) - g(A) \right)  + \sum_{i\in A} (1-x_i) \left( g(V) - g(V\setminus \{i\}) - 1 \right)$.  The first summation is non-positive as $x_i\geq 0$ and $g$ is increasing.  The second term is non-positive as $g(V) - g(V\setminus \{i\}) \leq 1$ from Proposition~\ref{prop:OptimalGammaforM}.
\qed\end{proof}

\begin{proposition}\label{prop:J2geqM}
  $\mathbf{J}_2 \geq_{\mathcal{G}} \mathbf{M}_\gamma$.
\end{proposition}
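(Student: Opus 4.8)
The plan is to follow the template of the proofs of Propositions~\ref{prop:J2geqS} and~\ref{prop:J1geqM}: first normalize $g$, and then compare the two maximizations set by set.

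\emph{Reduction step.} I would first reduce to the case where $g$ is increasing supermodular with $g(\{i\}) = 0$ for all $i \in V$. Replacing $g$ by $g - \operatorname{m}g$ changes $g$ only by a modular function, and $(g - \operatorname{m}g)(\{i\}) = 0 \geq 0$, so by the lemma preceding Corollary~\ref{thm:G+increasing} the function $g - \operatorname{m}g$ is increasing, hence (as it vanishes on $\emptyset$) lies in $\mathcal{G}_+$. The key check is that $\mathbf{J}_2$ and $\mathbf{M}_\gamma$ acquire exactly the same linear correction under this shift: a direct expansion of Definition~\ref{def:modularBoundsJegelkaIyer} gives $\mathbf{J}_2(g - \operatorname{m}g)(x) = \mathbf{J}_2 g(x) - \langle \operatorname{vec}(\operatorname{m}g), x \rangle$, and, using $\operatorname{m}(\gamma g) = \gamma \operatorname{m}g$ together with the explicit modular term in Definition~\ref{def:MarginRescalingExtension}, $\mathbf{M}_\gamma(g - \operatorname{m}g)(x) = \mathbf{M}_\gamma g(x) - \langle \operatorname{vec}(\operatorname{m}g), x \rangle$. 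Hence the difference $\mathbf{J}_2 g - \mathbf{M}_\gamma g$ is invariant under the reduction, and for the normalized function the optimal scale factor is $\gamma^{-1} = g(V) - \min_{i\in V} g(V\setminus\{i\})$ by Proposition~\ref{prop:OptimalGammaforM}.

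\emph{Term-by-term comparison.} For $g$ with zero singletons, the $A$-th term inside the maximization defining $\mathbf{J}_2 g(x)$ is $g(A) - \sum_{i\in A}(1 - x_i)\bigl(g(A) - g(A\setminus\{i\})\bigr)$, while the $A$-th term inside the maximization defining $\mathbf{M}_\gamma g(x)$ is $g(A) - \gamma^{-1}\sum_{i\in A}(1 - x_i)$ (since $\operatorname{m}(\gamma g) = 0$ and $|A| - \sum_{i\in A}x_i = \sum_{i\in A}(1-x_i)$). Their difference is
\[
\sum_{i\in A}(1 - x_i)\Bigl(\gamma^{-1} - \bigl(g(A) - g(A\setminus\{i\})\bigr)\Bigr).
\]
Each factor $1 - x_i$ is nonnegative because $x\in[0,1]^{|V|}$; and supermodularity (increasing marginal gains) gives $g(A) - g(A\setminus\{i\}) \leq g(V) - g(V\setminus\{i\}) \leq g(V) - \min_{j\in V} g(V\setminus\{j\}) = \gamma^{-1}$, so the bracketed factor is nonnegative as well. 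Hence every term of the $\mathbf{J}_2$-maximization dominates the corresponding term of the $\mathbf{M}_\gamma$-maximization, and taking the maximum over $A\subseteq V$ on both sides yields $\mathbf{J}_2 g(x) \geq \mathbf{M}_\gamma g(x)$.

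The estimate above is carried out for the maximal admissible $\gamma$; for any smaller valid scaling $\gamma'$, Proposition~\ref{thm:MarginBiggerGammaBetter} gives $\mathbf{M}_{\gamma'} \leq \mathbf{M}_\gamma \leq \mathbf{J}_2$, so the claimed inequality holds for every admissible scale factor. I expect the reduction step to be the only delicate part: one must verify that the two operators acquire \emph{identical} linear corrections when the singletons are normalized to zero, since the clean supermodularity estimate lives in the normalized regime and must transfer back to arbitrary $g \in \mathcal{G}$. Everything after that is the one-line marginal-gain inequality above.
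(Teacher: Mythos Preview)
Your proof is correct and follows essentially the same route as the paper's: reduce to increasing supermodular $g$ with the optimal scaling, then compare the $A$-indexed terms inside the two maximizations and use the supermodular marginal-gain bound $g(A)-g(A\setminus\{i\}) \le g(V)-g(V\setminus\{i\}) \le \gamma^{-1}$. The only cosmetic difference is that you normalize singletons to zero up front, which kills the extra term $-\sum_{j\in V\setminus A} x_j\, g(\{j\})$ the paper carries and dispatches separately; your explicit check that both $\mathbf{J}_2$ and $\mathbf{M}_\gamma$ shift by the same linear term under this normalization (and that the admissible $\gamma$'s are unchanged) is a welcome addition the paper leaves implicit.
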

\begin{proof}
 For $g$ increasing supermodular and appropriately scaled, the difference between each of the corresponding $\arg\max$ elements in the definition of $\mathbf{M}$ and $\mathbf{J}_2$, respectively, is $\sum_{i\in A} (1-x_i)(g(A) - g(A\setminus \{i\}) - 1) - \sum_{j\in V\setminus A} x_j g(\{i\})$.  The second term is clearly non-positive, and the first term is non-positive if $g(A) - g(A\setminus \{i\}) \leq 1$, which follows from Proposition~\ref{prop:OptimalGammaforM} and the supermodularity of $g$. 
\qed\end{proof}

We next demonstrate that neither margin rescaling nor slack rescaling dominate the other, and neither is guaranteed to be closer to the convex closure everywhere in the unit cube.
\begin{proposition}\label{thm:SlackNotDominatesMargin}
Neither slack rescaling nor margin rescaling dominates the other: $ \mathbf{M}_{\gamma} \nleq_{\mathcal{G}} \mathbf{S}$, $\mathbf{S} \nleq_{\mathcal{G}} \mathbf{M}_{\gamma}$.
\end{proposition}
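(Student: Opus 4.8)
The statement is a non-dominance result, so the plan is to exhibit explicit counterexamples: a supermodular $g$ and a point $x$ with $\mathbf{M}_{\gamma} g(x) > \mathbf{S} g(x)$, and a (possibly the same) $g$ and a point $x$ with $\mathbf{S} g(x) > \mathbf{M}_{\gamma} g(x)$. A preliminary observation is that one expects to need $|V|\ge 3$: for $|V|\le 2$ the maximizations in both operators are essentially degenerate and both coincide with the convex closure $\mathbf{C}$ (cf.\ Figure~\ref{fig:SlackRescalingMinusModularExample}), so nothing can be separated there. Since both operators treat modular shifts cleanly (cf.\ the proof of Proposition~\ref{thm:MarginRescalingScaleFactor}) and $\mathbf{S}$ is scale invariant while $\mathbf{M}_{\gamma}$ carries its own scale factor, I would construct the witness directly as an increasing supermodular $g$ with $g(\{i\})=0$ for all $i$, so that $\operatorname{m}g\equiv 0$ and, by Proposition~\ref{prop:OptimalGammaforM}, $\gamma = \big(g(V)-\min_{i\in V} g(V\setminus\{i\})\big)^{-1}$.

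The key algebraic tool is that, for such $g$, the term of $\mathbf{M}_{\gamma} g$ indexed by $A$ minus the term of $\mathbf{S} g$ indexed by $A$ factors as $(g(A)-\gamma^{-1})\sum_{i\in A}(1-x_i)$, which is nonnegative exactly when $g(A)\ge\gamma^{-1}$. Thus large-valued sets (those with $g(A)>\gamma^{-1}$, e.g.\ $A=V$ whenever $g(V)-\gamma^{-1}=\min_i g(V\setminus\{i\})>0$) push in favor of $\mathbf{M}_{\gamma}$, whereas small-valued sets push in favor of $\mathbf{S}$; so I want a $g$ with a spread of values straddling $\gamma^{-1}$. The concrete choice is a single $g$ on $V=\{a,b,c\}$ with $g(\emptyset)=g(\{a\})=g(\{b\})=g(\{c\})=0$, $g(\{a,b\})=g(\{a,c\})=g(\{b,c\})=1$, $g(\{a,b,c\})=3$; a direct check shows $g$ is increasing and supermodular (the marginal of each element along any chain is $0$, then $1$, then $2$), and $\gamma=\tfrac12$. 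For $\mathbf{M}_{\gamma}\nleq_{\mathcal{G}}\mathbf{S}$, evaluate at $x=(1,1,t)$ with $0<t<\tfrac13$: the maximizer for $\mathbf{S} g$ is the pair $\{a,b\}$, giving $\mathbf{S} g(x)=1$, while the $A=V$ term of $\mathbf{M}_{\gamma} g$ already gives $1+2t>1$. For $\mathbf{S}\nleq_{\mathcal{G}}\mathbf{M}_{\gamma}$, evaluate at $x=(v,v,0)$ with $\tfrac12<v<\tfrac34$: every term $\tfrac12 g(A)-|A|+\sum_{i\in A}x_i$ is non-positive (the largest, at $A\in\{\{a,b\},V\}$, equals $2v-\tfrac32<0$), so $\mathbf{M}_{\gamma} g(x)=0$, whereas the $\{a,b\}$-term of $\mathbf{S} g$ equals $2v-1>0$.

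The only substantive work is bookkeeping, and this is where I expect the main (minor) obstacle: dominance concerns the pointwise maximum over $A$, not individual terms, so at each chosen $x$ one must confirm that the claimed argmax set really does beat all $2^{|V|}=8$ subsets for each operator. The factored identity above makes this routine by pinning down exactly which terms can compete, so I anticipate no genuine difficulty beyond an eight-case verification at each of the two points. If a single function is felt to be less transparent, one may instead use two functions — for the first direction the simpler $g$ with $g(A)=0$ for $|A|\le 2$ and $g(V)=1$ already suffices (e.g.\ at $x=(1,1,0)$) — but using one $g$ has the bonus of simultaneously establishing that neither operator is everywhere closer to $\mathbf{C}$, as asserted in the sentence preceding the proposition.
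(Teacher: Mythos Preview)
Your proof is correct. Your witness $g$ on $V=\{a,b,c\}$ is supermodular and increasing (marginals $0,1,2$ along every chain), the optimal scale is indeed $\gamma=\tfrac12$, and the two point-evaluations do what you claim: at $x=(1,1,t)$ with $0<t<\tfrac13$ the $\{a,b\}$-term gives $\mathbf{S}g(x)=1$ (beating $3t$ from $A=V$) while the $A=V$ term already forces $\mathbf{M}_{\gamma}g(x)\ge 1+2t$; at $x=(v,v,0)$ with $\tfrac12<v<\tfrac34$ every term $\tfrac12 g(A)-|A|+\sum_{i\in A}x_i$ is nonpositive so $\mathbf{M}_{\gamma}g(x)=0$, while $\mathbf{S}g(x)\ge 2v-1>0$. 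The termwise identity $(g(A)-\gamma^{-1})\sum_{i\in A}(1-x_i)$ is a clean way to see \emph{why} the two operators trade off and is a nice touch.

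This is a genuinely different route from the paper's. The paper exhibits a single symmetric \emph{non-increasing} supermodular function on four elements (values $0,-\tfrac12,-\tfrac23,-\tfrac12,0$ by cardinality) and asserts that numerical evaluation shows $\mathbf{S}g>\mathbf{M}_{\gamma}g$ near $|\operatorname{set}(x)|=2$ and the reverse near $|\operatorname{set}(x)|=4$. Your construction is smaller ($|V|=3$), stays within the increasing cone $\mathcal{G}_+$ so that $\operatorname{m}g\equiv 0$ and the formulas are uncluttered, and replaces the paper's appeal to numerics with explicit closed-form inequalities at named points. The factored termwise difference also gives a structural explanation (sets with $g(A)>\gamma^{-1}$ favor $\mathbf{M}_{\gamma}$, those with $g(A)<\gamma^{-1}$ favor $\mathbf{S}$) that the paper's counterexample does not supply. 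The paper's choice, on the other hand, exercises the non-increasing branch of Definition~\ref{def:SlackRescalingNonincreasing} and hence implicitly tests the modular-shift machinery; your example does not, but that is not required by the proposition.
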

\begin{proof}
  Define $g$ to be a symmetric set function with $V = \{a,b,c,d\}$ and
$g(\emptyset) = 0,\ g(\{a\}) = -0.5,\ g(\{a,b\}) = -\frac{2}{3},\ g(\{a,b,c\}) = -0.5,\ g(\{a,b,c,d\}) = 0$.
It is straightforward to show numerically that $\mathbf{S}g(x) > \mathbf{M}_{\gamma}g(x)$ in the neighborhoods around where $|\operatorname{set}(x)|=2$, and that $\mathbf{M}_{\gamma}g(x) > \mathbf{S}g(x)$ in the neighborhood around where $|\operatorname{set}(x)|=4$.
\qed\end{proof}

\section{Experimental validation}\label{sec:Empirical}

In this section, we validate the theoretical results by optimizing an objective for exemplar selection from a set of images.  The objective to be optimized is
\begin{equation}\label{eq:CoverageObjectiveBudget}
  \arg\min_{A \subseteq V} -\sum_{i\in{V}} \left[ \min_{j\in A} \| y_i - y_j \| \leq \varepsilon \right], \text{ s.t.\ } |A|\leq C ,
\end{equation}
which is supermodular as it is simply the negative of a coverage function.
We have optimized this objective for a random subsample of images from \cite{Torralba2008MTI}, where $\varepsilon$ was set to the 90\% percentile of all pairwise distances.  For the LP relaxations, the discrete constraint $|A| \leq C$ was relaxed to $\|x\|_1 \leq C$.  Discretization of LP relaxations was achieved by setting $A$ to contain the $C$ largest indices of $x$.  LP relaxations began with the tightest budget constraint, and each increased budget was initialized with the cutting planes of the previous iteration.  Results are shown in Figure~\ref{fig:CoverageResults}.
\begin{figure}[t]
  \centering
  \includegraphics[width=0.6\textwidth]{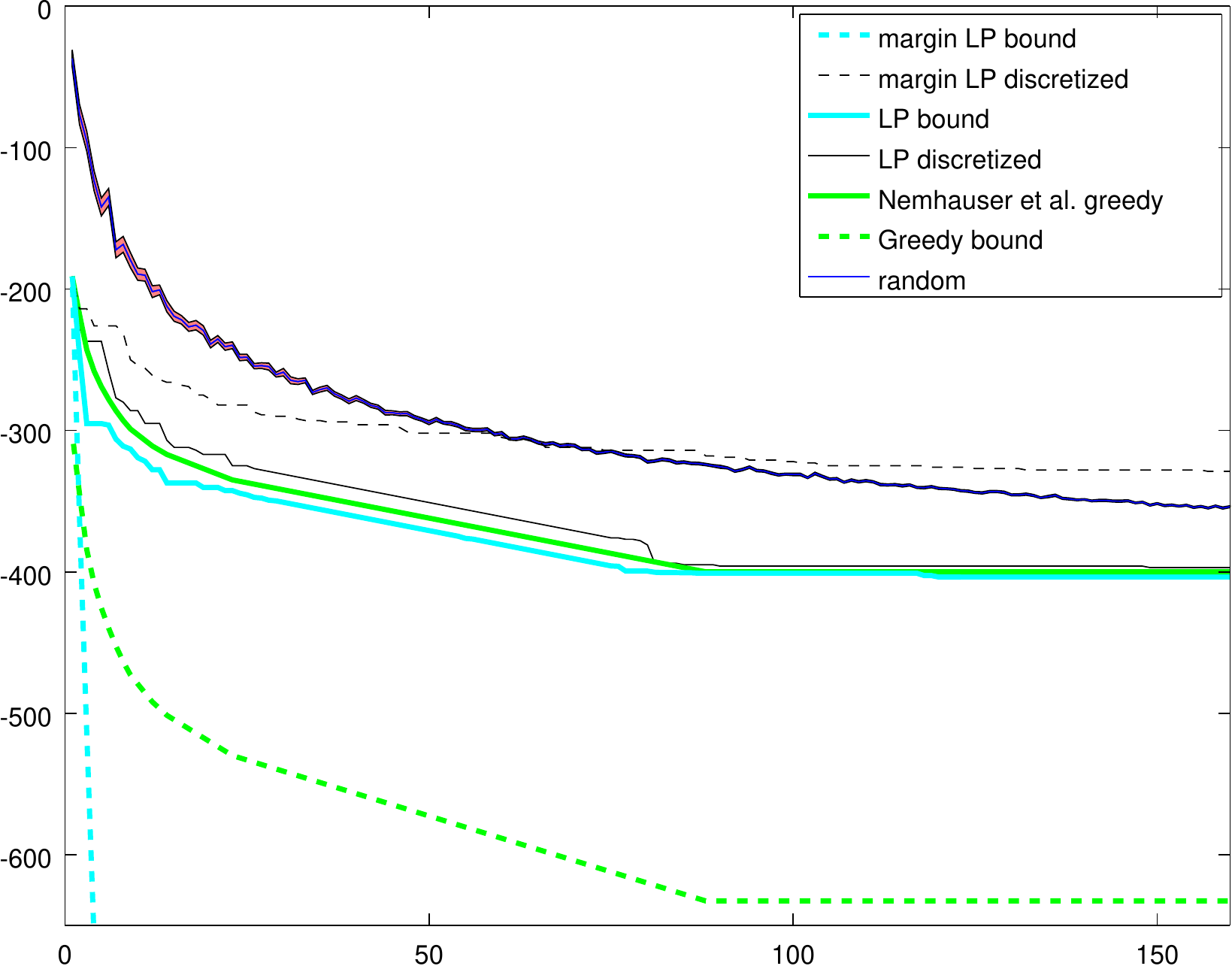}
  \caption{Empirical results optimizing Equation~\eqref{eq:CoverageObjectiveBudget} on a sample of 400 images, where the horizontal axis represents the budget $C$.}\label{fig:CoverageResults}
\end{figure}
 The margin rescaling LP bound is very loose, at a budget of $C=160$ the bound is $-8195.7$ although the minimum of the coverage objective is $-400$.  By contrast, the LP that incorporates the constraints from Definition~\ref{def:modularBoundsJegelkaIyer} gives very tight optimality bounds close to the empirical performance of the Nemhauser et al.\ greedy algorithm, and a huge improvement over the known $(1-1/e)$ multiplicative offline bound \cite{Nemhauser1978}.
Minimization for a single budget took a few seconds on a MacBook Pro for the LP incorporating constraints from $\mathbf{J}_1$ and $\mathbf{J_2}$, while a single budget could take tens of minutes for minimization of $\mathbf{M}_\gamma$.

\section{Discussion and Conclusions}

In this work, we have formally analyzed slack and margin rescaling as convex extensions of supermodular set functions.  Although they were originally developed for empirical risk minimization of (increasing) functions, we show that supermodularity can be exploited to first transform a set function into an increasing function with a linear number of accesses to the loss function.  We have shown that neither slack rescaling nor margin rescaling dominates the other in the sense that it is strictly closer to the convex closure.  However, computation of slack rescaling for a supermodular function corresponds to a non-supermodular maximization problem.  Margin rescaling, by contrast, remains tractable.  We have further shown that margin and slack rescaling correspond to optimal additive and multiplicative extensions, respectively, given a computational budget of one oracle access to the loss function.  Still, slack and margin rescaling are dominated by extensions derived from known modular bounds on submodular functions (Definition~\ref{def:modularBoundsJegelkaIyer}).  Empirically, incorporating cutting planes from these extensions gives nearly optimal bounds on an objective that measures coverage on a budget, indicating that this approach may be of wider interest for obtaining very tight empirical optimality guarantees for submodular maximization.  At the same time, this indicates that novel structured prediction algorithms with improved performance over structured output SVMs may be developed based on the extensions in Definition~\ref{def:modularBoundsJegelkaIyer}.  Complete source code for the experiments is available for download from \url{https://github.com/blaschko/supermodularLP}.

\subsubsection*{Acknowledgments}

This work is funded by Internal Funds KU Leuven, FP7-MC-CIG 334380, and the Research Foundation - Flanders (FWO) through project number G0A2716N.  

\bibliographystyle{abbrv}
\bibliography{biblio}

\end{document}